\documentclass{article}

\PassOptionsToPackage{numbers, compress}{natbib}
 \usepackage[preprint]{neurips_2026}

\usepackage[utf8]{inputenc} % allow utf-8 input
\usepackage[T1]{fontenc}    % use 8-bit T1 fonts
\usepackage{url}            % simple URL typesetting
\usepackage{booktabs}       % professional-quality tables
\usepackage{amsfonts}       % blackboard math symbols
\usepackage{nicefrac}       % compact symbols for 1/2, etc.
\usepackage{microtype}      % microtypography
\usepackage{xcolor}         % colors
\usepackage{graphicx}       % images

\usepackage[colorlinks,
            linkcolor=blue,
            anchorcolor=blue,
            citecolor=cyan]{hyperref}
\usepackage{hyperref}
\usepackage{url}

\usepackage{xspace}

\usepackage{subcaption}
\usepackage{multirow}
\usepackage{algorithm}
\usepackage[table]{xcolor} % 支持颜色
\usepackage{booktabs}      % 美化表格线
\usepackage{array}         % 支持更灵活的列定义
\usepackage{pifont}        % 提供 \ding 对勾叉号符号
\usepackage{makecell}  % 允许表格单元格内换行
\usepackage{adjustbox}  % 自动缩放表格宽度但不缩得太小
\usepackage{tikz}
\usepackage{soul}
\usepackage{xcolor}
\usepackage{caption}
\usepackage{amssymb}
\usepackage{amsmath} 
\usepackage{booktabs}
\usepackage{enumerate}
\usepackage{graphicx}
\usepackage{subcaption}
\usepackage{xspace}
\usepackage{float}
\usepackage{bbm}
\usepackage{bm}
\usepackage{multirow}
\usepackage{booktabs}
\usepackage{color}
\usepackage{framed}
\usepackage{stfloats}
\usepackage{iitem}
\usepackage[T1]{fontenc}
\definecolor{shadecolor}{RGB}{180,180,180}
\usepackage{url}
\newcommand{\ignore}[1]{}

\usepackage{algpseudocode}
\usepackage{amsthm}
\usepackage{amsmath}
\usepackage{tikz}

\usepackage{amsmath}
\usepackage{colortbl}
\usepackage{color, xcolor}
\definecolor{reviseorange}{RGB}{180,90,0}
\usepackage{wrapfig} % 在导言区添加

\newtheorem{proposition}{Proposition}
\newtheorem{theorem}{Theorem}
\newtheorem{lemma}{Lemma}
\newtheorem{corollary}{Corollary}

\newtheorem{assumption}{Assumption}

\title{GraphPO: Graph-based Policy Optimization for Reasoning Models}

\author{%
\textbf{Yuliang Zhan}$^{1,\ast}$ \quad
\textbf{Xinyu Tang}$^{1,\ast}$ \quad
\textbf{Jian Li}$^{1}$ \quad
\textbf{Dandan Zheng}$^{2}$ \quad
\textbf{Weilong Chai}$^{2}$ \\
\textbf{Jingdong Chen}$^{2}$ \quad
\textbf{Jun Zhou}$^{2}$ \quad
\textbf{Ge Wu}$^{2}$ \quad
\textbf{Wenyue Tang}$^{2}$ \quad
\textbf{Hao Sun}$^{1}$
\\[1mm]
$^{1}$ Gaoling School of Artificial Intelligence, Renmin University of China 
$^{2}$ Ant Group \\
}
\begin{document}

\maketitle

\begingroup
\renewcommand{\thefootnote}{\fnsymbol{footnote}}
\footnotetext[1]{Equal contribution.}
\footnotetext[2]{Corresponding author.}
\endgroup

\begin{abstract}
Reinforcement Learning with Verifiable Rewards (RLVR) has become a standard paradigm for enhancing the capability of large reasoning models. 
RLVR typically samples responses independently and optimizes the policy using from final answers. This paradigm has two limitations. First, independently responses often contain similar intermediate reasoning steps, causing redundant exploration and wasted computation. Second, sparse final-answer rewards make it hard to identify useful steps. Tree-based methods partly address this problem by sharing prefixes and comparing branches from the same prefix to provide fine-grained signals. However, tree branches are still expanded independently. When different branches reach similar reasoning states, they cannot share information and repeat similar exploration. Moreover, tree-based methods ignore such dispersion and only perform local comparisons within separate branches, which can lead to higher variance in advantage estimation.
To address this challenge, we propose \textbf{GraphPO} (\textbf{Graph}-based \textbf{P}olicy \textbf{O}ptimization), a novel RL framework that represents rollouts as a directed acyclic graph, with reasoning steps as edges and semantic states summarized from the reasoning paths as nodes. GraphPO merges semantically equivalent reasoning paths into equivalence classes, allowing them to share suffixes and reallocating budget away from redundant expansions to diverse exploration. Furthermore, we assign efficiency advantages to incoming edges and correctness advantages to outgoing edges, thereby improving inference efficiency while deriving process supervision from outcome. Theory shows that GraphPO reduces advantage-estimation variance and enhances reasoning efficiency.
Experiments on three LLMs across reasoning and agentic search benchmarks show that GraphPO consistently outperforms chain- and tree-based baselines with the same token budgets or response budgets. The code will be released soon.
\end{abstract}

\section{Introduction}\label{sec:intro}
% This is introduction~\citep{zhan2024over,tang2025enhancing,tang2025rethinking}
% RL对LLM推理很有用。现在的RL方法常用RLVR（Reinforcement Learning with Verifiable Rewards）方法，使用输出来让模型学习答案的回答。但是这种基于输出奖励稀疏，
% 进行cridit assignment时很难知道奖励哪一步，惩罚哪一步。而且
% 为了解决

% 最近，在后训练中使用Reinforcement Learning（RL）使得推理模型在agent、coding、math领域取得了里程碑式的进展。很多大推理模型使用RLVR通过利用最终答案的可验证正确性作为奖励信号来增强推理能力。然而，RLVR方法面临着两个核心挑战：首先，最终答案奖励本质上是稀疏的，因此模型在优化过程中进行信用分配时难以指出哪些中间步骤是有效。其次，现有方法通常独立采样和评估多条推理轨迹，未能充分复用不同轨迹中相似的中间推理状态，导致探索冗余较高，训练效率受限。
Recently, Large Reasoning Models~(LRMs) have achieved milestone progress in agentic, coding, and mathematical reasoning tasks~\citep{guo2025deepseek,team2025kimi,yang2025qwen3}. 
They typically adopt reinforcement learning with verifiable rewards~(RLVR), where the correctness of the final answer is used as a binary reward to optimize the policy~\citep{becker2025troll,zhao2025geometric,yu2025dapo}. 
Effective RL requires both accurate reward credit assignment for policy updates and diverse exploration during sampling.
However, current RLVR methods still struggle with both. First, outcome rewards are inherently sparse, which hinders credit assignment to intermediate reasoning steps that contribute to the final answer~\citep{kazemnejad2024vineppo,setlurrewarding}. 
Second, RLVR methods typically sample response independently, where different responses often contain repeated intermediate reasoning steps. This leads to substantial sampling redundancy and limits exploration diversity~\citep{yang2025treerpo,huang2026pros}.

% 为了克服上述挑战，现有方法主要围绕步骤级监督展开。过程奖励模型（process reward models, PRMs）可以为中间推理步骤提供密集奖励，从而缓解稀疏结果奖励下的信用分配问题，但它们通常依赖昂贵的步骤级标注，并且跨领域泛化能力有限。搜索式方法可以在推理时估计中间状态价值，但往往计算开销较大，且难以自然融入端到端 RL 训练。近期，一些方法尝试在 RL 框架内直接从结果奖励中推导步骤级信号，例如隐式过程奖励，从而减少对人工标注和外部搜索的依赖。然而，这些方法大多仍基于独立采样的 reasoning trajectories，无法复用不同轨迹中相似的中间状态。为了同时改善信用分配和探索效率，tree-based RL 方法将推理过程从独立链式轨迹扩展为结构化 rollout，使不同分支能够共享前缀，并在分支点形成更自然的步骤级对比信号。
For accurate reward credit assignment, existing methods introduce step-level signals. Process Reward Models (PRMs)~\cite{shen2026let,wang2024math,zhang2025lessons} reward intermediate reasoning steps, helping the policy identify steps that contribute to the final answer. However, they usually require costly step-level annotations and often generalize poorly across domains. 
Furthermore, recent methods avoid explicit PRMs by estimating intermediate-state values, such as search-based estimation~\cite{chen2024alphamath,qimutual} and extraction of process signals from outcome rewards~\cite{cui2025process,fenggroup}. Although these methods alleviate reward sparsity by providing denser supervision signal, they still rely on chain-based rollouts and cannot capture similarity between independent rollouts(Figure~\ref{fig:Rollout}a), which limits both credit assignment and exploration efficiency. To address this challenge, \emph{tree-based RL methods} organize rollouts as trees~\cite{yang2025treerpo,huang2026pros,yizhitreepo,ji2025tree,wang2025scheduling,hou2025treerl,guosegment}, where shared prefixes avoid repeated reasoning and branching points provide fine-grained supervision (Figure~\ref{fig:Rollout}b).

% 然而，tree-based 方法仍存在三点局限。首先，树结构会倾向于扩展更多分支来提高成功率，容易在 post-training 后诱导模型生成冗余中间步骤，从而降低推理效率。其次，提前停止的分支难以提供有效的策略更新信号，限制了优化效率。第三，由于子树的独立，语义等价的中间推理状态在不同子树中仍被独立处理，导致冗余探索仍然存在。这三个局限并非独立缺陷，而是同一结构性盲区的不同表现：树结构强制分支独立，因此原理上无法表达"推理汇聚"——即不同推理路径经由不同中间步骤到达语义等价状态这一普遍现象。树结构无法感知汇聚，因此既不能从中提取更多的监督信号，也不能利用它消除冗余，更不能将其用于效率优化。这一分析指向一个根本性的范式转变：推理过程的忠实数学抽象不是树，而是有向无环图。
Despite their benefits, tree-structured rollouts still have three limitations. First, deeper trees give uncertain intermediate states more chances to reach correct answers, but this optimization may encourage redundant reasoning steps and \emph{\textbf{reduce inference efficiency}}.
Second, early-stopped path provide limited training signals, \emph{\textbf{reducing exploitation efficiency}}. 
Third, tree only share early prefixes. After branches diverge,  they may still reach similar intermediate states and expand them independently, causing redundant exploration and \emph{\textbf{limiting exploration efficiency}}, as analyzed in Section~\ref{sec:Empiricalstudy}.
These limitations arise from the same structural constraint: trees treat branches as independent paths. Therefore, they cannot represent equivalent states reached by different paths, compare the efficiency of these paths, or reuse subsequent reasoning after these states. Based on this observation, we model rollout as a directed acyclic graph, which explicitly aggregates equivalent reasoning states.

\begin{wrapfigure}{r}{0.5\textwidth}  % r表示图片在右边；l为左边
  \centering
  \vspace{-10pt}
  \includegraphics[width=0.5\textwidth]{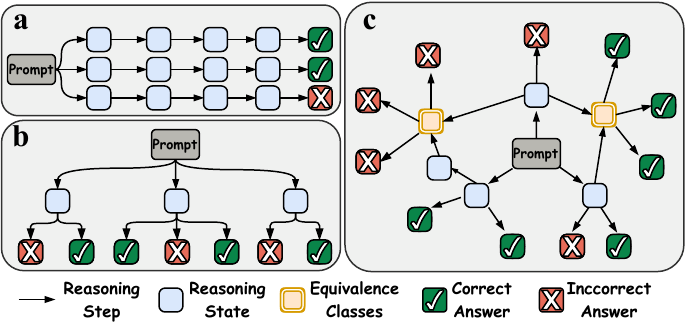}
    \vspace{-13pt}
  \caption{Comparison of rollout strategies. \textbf{(a)} Chain rollouts sample independent trajectories. \textbf{(b)} Tree rollouts share prefixes. \textbf{(c)} Graph rollouts merge semantically similar states.}
  \label{fig:Rollout}
    \vspace{-10pt}
\end{wrapfigure}
% 基于这一原理，我们提出GraphPO——一个将推理rollout结构重构为DAG的全新RL框架。
Based on this insight, we propose \textbf{GraphPO} (\textbf{Graph}-based \textbf{P}olicy \textbf{O}ptimization), a novel RL framework that represents reasoning rollouts as an RL graph, where edges denote generated reasoning steps and nodes denote the intermediate semantic states summarized from the reasoning path from the initial prompt (Figure~\ref{fig:Rollout}c).
During graph-based rollout, GraphPO incrementally constructs the RL graph by detecting semantically similar states across paths and virtually merging similar reasonings into equivalence classes. These equivalence classes allow GraphPO to \emph{improve inference}, \emph{exploitation}, and \emph{exploration efficiency} through path comparison, signal sharing, and redundancy reduction.
First, paths reaching the equivalence class solve the similar sub-problem. GraphPO therefore introduces a path-level efficiency advantage that rewards shorter paths within each class, steering the learned policy toward more efficient reasoning.
Second, reasoning in the same equivalence class share the same semantic state, so their subsequent correctness samples can be shared. GraphPO therefore improves exploitation efficiency by sharing subsequent correctness signals among similar reasonings, which produces denser step level rewards. Even early-stopped branches can receive credit from equivalent partners, improving rollout utilization and stabilizing policy updates.
Third, when a path reaches an already discovered equivalence class, further expansion is likely redundant. GraphPO reduces its next-layer budget and reallocates the saved computation to novel frontier states, encouraging broader exploration under the same token budget. As analyzed in Section~\ref{sec:Empiricalstudy}, this effectively improves exploration efficiency.

%  We introduce \textbf{GraphPO}, a novel RL framework with graph-based rollout strategy，仅通过answer可以得到self-Emergent Process Supervision。
$\bullet$ We introduce GraphPO, a graph-based RL framework that merges semantically similar reasoning to reduce redundant exploration, improve rollout utilization and get step-level rewards.

% 基于通过detecting semantically similar states的graph-based rollout，我们提出dual-group graph advantage。dual-group graph advantage由入边的efficiency advantage和出边的correctness advantage构成，能够有效的提升推理效率、探索效率。
$\bullet$ We propose a dual-group graph advantage estimation method that improves reasoning efficiency by comparing incoming paths within each equivalence class and improves reasoning performance by comparing outgoing steps at each node.

% 我们提供了充分的理论和实验，证明我们的方法能够在更少的rollout budgets（tokens）的情况下取得更好的性能
$\bullet$ Extensive experiments and theoretical analyses demonstrate that GraphPO achieves better performance with the same rollout budgets or response budgets.

\section{Related Work}
\label{sec:relatedwork}
\textbf{Reinforcement Learning with Sparse Supervision.}
RLVR drives post-training of large reasoning models with a binary outcome reward~\citep{guo2025deepseek,team2025kimi,yu2025dapo}, but this signal is too sparse for critical reasoning step of credit assignment~\citep{kazemnejad2024vineppo,wen2026rlvrimplicit}. A common remedy is to densify supervision with process reward models~\citep{shen2026let,wang2024math,zhang2025lessons,zou2025reasonflux}, but they require costly annotations and transfer poorly across domains. Recent work reduces this dependence by deriving process signals from outcomes through value estimation~\citep{chen2024alphamath,qimutual}, implicit rewards~\citep{cui2025process,liu2025trust,han2026selfaligned}, or segment-level credit assignment~\citep{setlurrewarding,fenggroup,guosegment}. These methods estimate supervision from independent rollouts. Thus, outcomes from semantically equivalent states cannot support each other. GraphPO merges such states into equivalence classes, enabling suffix sharing that provides denser, lower-variance samples for credit assignment.
% GraphPO可以通过合并等价状态来共享后缀，从而生成方差更低的过程监督信号。

\textbf{Tree Search for Reinforcement Learning.}
Tree-structured rollouts share prefixes across expansions, allowing them to generate more trajectories than conventional RLVR methods under the same token budget. The branching points in these trajectories naturally provide step-level comparisons.~\citep{yang2025treerpo,yizhitreepo,ji2025tree,hou2025treerl}. Recent variants further improve efficiency by reusing common prefixes, scheduling branch expansion toward informative states, or extending frontier branches through lookahead~\citep{huang2026pros,wang2025scheduling,xing2025lookahead,shi2025montecarlo,huang2025treeopo}. 
These methods share prefixes, but their tree topology still treats diverged branches independently, even when they later reach semantically equivalent states. GraphPO instead merges such states into a DAG, enabling length comparison, signal sharing, and budget reallocation within each equivalence class to improve reward exploitation and exploration efficiency.
\section{Empirical Study}
\label{sec:Empiricalstudy}
% In this section, we present our empirical study.我们分析不同的采样策略的推理冗余和探索效率。
In this section, we analyze reasoning redundancy and exploration efficiency of rollout strategies.

\textbf{Experimental Setup.} 
% 我们follow PROS~\citep{huang2026pros}，在不同长度前缀窗口下计算64条推理路径的语义相似度.我们对比当前前缀与其他更短前缀。为了消除表面形式的干扰，提取语义本质，我们使用Qwen2.5-7B-Instruct~\cite{qwen2025qwen25technicalreport}进行推理路径总结。然后我们follow MIRB~\citep{ju2025mirb}，使用SFR-Embedding-2\_R~\citep{meng2024sfr}计算不同总结之间的语义相似度.我们分析了独立采样、树结构采样和图结构采样三种不同的采样策略。我们在数据集MATH500~\citep{hendrycksmath2021}数据集上进行分析.
Following PROS~\citep{huang2026pros}, we sample 64 reasoning trajectories for each prompt. To reduce expressive variation and better capture underlying semantics, we summarize intermediate reasoning states with Qwen2.5-7B-Instruct~\citep{qwen2025qwen25technicalreport} and measure summary similarity using SFR-Embedding-2-R~\citep{meng2024sfr}, following MIRB~\citep{ju2025mirb}. We use MATH500~\citep{hendrycksmath2021} and compare independent chain-based, tree-based, and graph-based sampling strategies, using Qwen2.5-7B-Math~\citep{yang2024qwen2} for sampling.

\begin{figure*}[t]
    \centering
    \includegraphics[width=\textwidth]{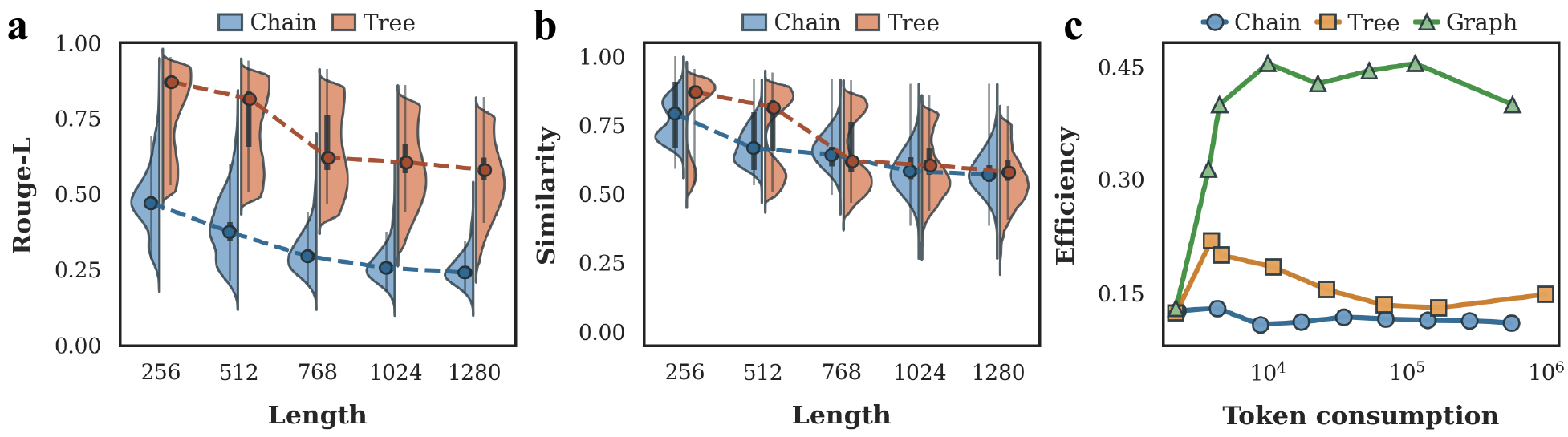}
    % Semantic Redundancy和探索效率的empirical study。（a）语义相似度和窗口大小的关系。(b)Pairwise normalized edit distance随窗口大小变化。越高，相似度越高。（c）不同采样策略的探索效率比较。
    \caption{Empirical study of semantic redundancy and exploration efficiency. \textbf{(a)} Pairwise normalized edit similarity across prefix window sizes. \textbf{(b)} Semantic similarity across prefix window sizes. \textbf{(c)} Exploration efficiency across different sampling strategies. For (a) and (b), wider regions in each violin indicate denser pairwise score distributions.}
    \vspace{-13pt}
\label{fig:empirical}
\end{figure*}
\textbf{Chain and Tree Structured Sampling Contain Substantial Semantic Redundancy.}
% 图 2a 和图 2b 显示，许多前缀与其他路径中的较短前缀具有很高的语义相似度（similarity > 0.9）。这说明，基于链的 rollout 还是基于树的 rollout，都经常重复访问相似的中间推理状态。
% 为了展示不同采样策略产生冗余。我们对每一个prompt的tolloutextract prefixes under different window lengths. For each prefix, we compute its semantic similarity and Rouge-L~\citep{lin2004rouge} with shorter prefixes from other trajectories。我们在图~\ref{fig:empirical}a和b中展示了the distribution of pairwise semantic similarity and Rouge-L between prefixes and shorter prefixes from other trajectories under different prefix window sizes.
To show the redundancy produced by different sampling strategies, we extract rollout prefixes under different window sizes for each prompt. For each prefix, we compute its semantic similarity and Rouge-L~\citep{lin2004rouge} score with shorter prefixes from other trajectories. Figures~\ref{fig:empirical}a and~\ref{fig:empirical}b show the distributions of pairwise semantic similarity and Rouge-L scores under different prefix window sizes, where many prefixes are highly similar to prefixes from other paths (similarity > 0.9). This suggests that rollouts based on chains and trees both often revisit similar intermediate reasoning states.

% 详细来说，由图~\ref{fig:empirical}a可以看出，chain采样的到的shorter prefixes yield markedly higher similarity, indicating substantial redundancy in repeatedly generating nearduplicate initial reasoning steps. Tree 通过共享前缀的方法，降低了早期推理步骤的冗余，从而通过更少的token预算实现更高的探索diverse。但是Figure~\ref{fig:empirical}b显示，Tree采样的到的prefixes之间的语义相似度仍然很高，和chain采样的prefixes之间的相似度并没有明显降低。这说明树结构虽然通过共享祖先减少了一部分重复计算，但它仍然会在不同分支中反复展开相似的中间推理状态。特别是，当两个分支在语义上已经到达相似状态时，树结构仍然将它们视为独立节点，并继续分别扩展它们。随着树的加深，共享前缀的影响逐渐减弱，高相似度区域也向低相似度区域移动，但这种跨分支冗余仍然存在。
Specifically, Figure~\ref{fig:empirical}a shows that chain sampling produces highly similar shorter prefixes, indicating substantial redundancy in early reasoning steps. Tree-based sampling reduces this early redundancy by sharing prefixes, thus improving exploration diversity under the same token budget.
However, Figure~\ref{fig:empirical}b shows that tree prefixes still have high semantic similarity, with similarity scores close to chain sampling. This suggests that the tree structure reduces part of the repeated computation by sharing ancestors, but it still independently reach and expand semantically similar reasoning, causing redundant exploration. In particular, when two branches have reached semantically similar states, the tree structure still treats them as independent nodes and continues to expand them separately.
% % 同时链式和树式展开表现出不同的冗余模式。在链式展开中，图a显示较短前缀存在两个主要相似度区域：一部分前缀与其他路径高度相似，另一部分相似度较低。随着前缀长度增加，高相似度区域逐渐减弱，大部分得分转向低相似度区域。这说明独立采样的链式轨迹可能在早期访问相似的中间状态，但这些轨迹随后会逐渐分化。
% % 相比之下，树式展开在不同前缀长度下都保留了明显的高相似度区域，因此图b始终呈现两个主要集中区域。这表明树结构虽然通过共享祖先减少了一部分重复计算，但它仍然会在不同分支中反复展开相似的中间推理状态。特别是，当两个分支在语义上已经到达相似状态时，树结构仍然将它们视为独立节点，并继续分别扩展它们。随着树的加深，共享前缀的影响逐渐减弱，高相似度区域也向低相似度区域移动，但这种跨分支冗余仍然存在。
% Meanwhile, chain rollouts and tree rollouts exhibit different redundancy patterns. In chain rollouts, short prefixes contain two clear groups of scores: some are highly similar to prefixes from other trajectories, while others are much less similar. As the prefix length increases, most scores shift toward the lower-similarity region. This suggests that independently sampled chains may overlap early, but gradually diverge later.
% In contrast, tree rollouts retain a clear high similarity region across different prefix lengths because many branches share early ancestors. However, shared ancestors only reduce redundancy along common prefixes. Semantically similar states in different branches are still treated as separate nodes and expanded independently. 

These observations motivate a graph-based rollout structure. Instead of only sharing exact prefixes, Graph can connect semantically similar intermediate states across different branches. In this way, later rollouts can reuse information from related states and adaptively reallocate rollout budgets, rather than expanding each similar state in isolation. This graph structure reduces redundant exploration and provides a more fine-grained basis for credit assignment across reasoning steps.

\textbf{Graph Structured Turns Semantic Redundancy into Exploration Efficiency.}
% 证明确实可以增加探索效率
% 证明探索效率（正确步骤*token/token，横轴是采样策略，纵轴是探索效率，柱状图）
% 为了证明图结构采样可以将冗余推理步骤转化为更有效的探索。
To examine whether graph structured sampling converts redundancy into useful exploration, we measure exploration efficiency under different token budgets. As the budget increases, chain sampling adds more independent rollouts, whereas tree and graph sampling increase the branching factor and depth. For the current $n$ rollouts, let $\mathcal{S}_n$ denote all reasoning steps, and let $|s|$ be the token cost of step $s$. For tree and graph rollouts, a step is correct if it can lead to a correct final answer. For chain rollouts, each full trajectory is treated as one step. We define exploration efficiency as:
\begin{equation}
    \mathrm{Eff}(n)=
    \frac{\sum_{s\in \mathcal{S}_n}\mathbf{1}[s\ \text{is correct}]\,|s|}
    {\sum_{s\in \mathcal{S}_n}|s|}.
    \label{eq:eq1}
\end{equation}
A higher value indicates that more computation is allocated to useful reasoning progress rather than redundant expansions.
Figure~\ref{fig:empirical}c shows that chain-based sampling remains inefficient as token consumption grows, since independent chains repeatedly explore overlapping or unproductive states. Tree-based sampling improves efficiency at small budgets through prefix sharing, but its advantage decreases with larger budgets. This suggests that later branches still expand semantically similar states independently. In contrast, graph-based sampling achieves substantially higher efficiency across token budgets. Its efficiency rises quickly after the initial rollout stage and remains consistently above chain- and tree-based sampling. This supports our motivation that detecting and merging semantically similar states redirects rollout budget from redundant reasoning to more useful exploration.

\section{Method}
\label{sec:method}
\begin{figure*}[t]
    \centering
    \includegraphics[width=\textwidth]{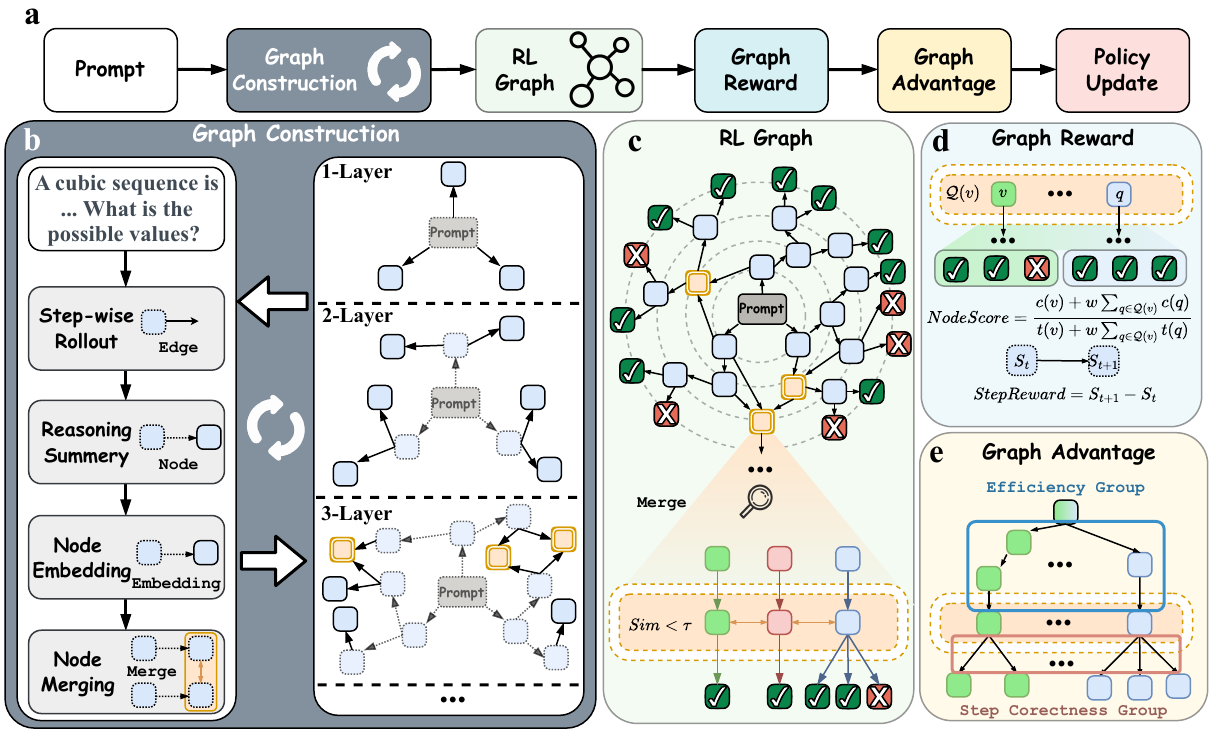}
    \caption{Overview of GraphPO. \textbf{(a)} Overall model architecture. \textbf{(b)} Graph Construction via Step-Level Expansion. \textbf{(c)} The resulting RL graph shares downstream successors among merged states. \textbf{(d)} Scoring Nodes and Deriving Graph Step Rewards. \textbf{(e)} Dual-Group Graph Advantage.}
    \vspace{-13pt}
    \label{fig:main}
\end{figure*}
% 中文译文 本节介绍 GraphPO。我们先概述整体方法，再介绍图构建、图奖励和图优势。
In this section, we describe Graph-based Policy Optimization. We first delineate the overall methodology, then detail graph construction, graph reward, graph advantage and policy optimization. 

\subsection{Overview}
% 中文译文 基于上述分析，我们提出 GraphPO，一个基于图采样的强化学习框架。GraphPO 将 rollout 组织为有向无环图，其中每个采样的推理 step 形成一条边，执行该 step 后得到的完整 reasoning path 被总结并嵌入为节点状态，用于后续的等价检测和合并。
Based on the above analysis, we propose GraphPO, a novel reinforcement learning framework with graph-based sampling. GraphPO incrementally constructs a directed acyclic graph $\mathcal{G}=(\mathcal{V},\mathcal{E})$ step by step, with sampled reasoning steps as edges and nodes represent intermediate semantic states summarized from the reasoning path starting at initial prompt and ending at current step. 
GraphPO then embeds these node states for equivalence detection. GrphPO then compares node embeddings across paths and layers, and merges semantically similar states into equivalence classes, yielding a graph with both direct and shared edges (Section~\ref{sec:graph_construction}). Equivalence classes reduce redundant exploration by reallocating rollout budgets among semantically equivalent paths. 
Based on this graph, equivalence classes also enable reuse of reasoning suffixes across similar states, converting sparse outcome rewards into dense step-level edge rewards for credit assignment (Section~\ref{sec:graph_reward}). We further introduce a dual-group graph advantage for credit assignment and efficiency learning (Section~\ref{sec:graph_advantage}). The correctness group compares direct edges and shared steps from similar state, and the efficiency group compares paths that reach the same merged state. An
overview is depicted in Figure~\ref{fig:main}. The algorithm is presented in Appendix~\ref{app:algorithm}.

\subsection{Graph Construction}
\label{sec:graph_construction}
% 中文译文 只复用文本前缀会漏掉语义相同但表达不同的中间状态。GraphPO 因此逐 step 构建图，在保持原始生成上下文的同时，将每个新生成的 reasoning path 转化为可比较的语义节点。
% 和树不同，Graph需要merge相近语义的reasoning path得到equivalence  classes，从而提升推理效率、探索效率、利用效率。
Tree rollouts can reduce repeated prefixes. Howerer, The empirical study shows that when different branches reach similar reasoning states, they still explore independently and cause redundant exploration. GraphPO therefore builds a rollout graph, where edges are generated reasoning steps and nodes are the semantic states summarized from the reasoning path starting at the initial prompt. 
% By merging similar states into equivalence classes, GraphPO can improve inference, exploration, and exploitation efficiency.

% 中文译文 给定 prompt $x$，GraphPO 初始化根节点 $v_0$，并在 $h=1,\ldots,H$ 上逐层扩展。对于活跃节点 $u$，其生成上下文由从根到 $u$ 的路径上所有边 token 依序拼接得到，而不是由节点拼接得到。每个新采样的推理 step 被登记为边，新节点表示加入该 edge 后的完整路径所诱导的语义状态。
As shown in Figure~\ref{fig:main}b, GraphPO initializes the prompt as the root $v_0$ and expands the graph layer by layer. Each step ${u\to v}$ is added as a direct edge $e(u,v)$, where $u,v\in\mathcal{V}$. Following Section~\ref{sec:Empiricalstudy}, we use the policy model itself to summarize the entire edge path from $v_0$ to $v$, and node $v$ stores the summarize as semantic state $s_v$.
% 中文译文 为避免表面差异干扰汇聚检测，GraphPO 用冻结 extractor 将由 edge tokens 构成的完整 reasoning path 总结为结构化语义状态 $s_v$，再用冻结 embedder 得到节点嵌入 $\mathbf{z}_v$。节点合并基于这些 path-level 语义嵌入，而不是 token 级文本重合。
We embed each semantic state $s_v$ as $\mathbf{z}_v$ and use cosine similarity for equivalence detection. 
To preserve acyclicity, GraphPO only compares node pairs that are not in an ancestor--descendant relation. 
Two comparable nodes $u$ and $v$ are grouped into the same equivalence class when $\operatorname{sim}(u,v)=\frac{\mathbf{z}_u^{\top}\mathbf{z}_v}{\|\mathbf{z}_u\|\|\mathbf{z}_v\|}\ge \kappa$. 
Merged nodes form an equivalence class $\mathcal{Q}(v)$. Appendix~\ref{app:robustness} theoretically proves that GraphPO remains robust under imperfect equivalence detection.
% 中文译文 不同路径的真实前缀不同，物理合并会破坏策略上下文，因此采用虚拟合并。等价节点共享对方已探索出的后继，且共享边只能指向更深节点以保持 DAG。
Equivalent nodes keep their own direct context, but share suffix from other members of the same class. Let $\mathcal{C}(u)$ be the direct suffix of $u$. The graph suffix $\mathcal{C}_g(u)$ is:
\begin{equation}
    \mathcal{C}_g(u)=\mathcal{C}(u)\cup\{c\mid q\in \mathcal{Q}(u)\setminus\{u\},\, c\in\mathcal{C}(q)\}.
\end{equation}
% 中文译文 当一个节点经更长路径到达已知等价类时，其后续计算更冗余。GraphPO 在保留至少一个续写的前提下减半其采样预算，把计算转移到更短或尚未汇聚的状态。
As shown in Figure~\ref{fig:main}c, when a path reaches an existing equivalence class, further expansion is likely redundant. GraphPO halves its next-layer budget while keeping at least one continuation and reallocates the saved budget to other paths.
% 中文译文 共享后继通过预算重分配减少冗余探索，因此 GraphPO 将计算从已覆盖的语义区域转向更有新颖性的状态从而提高探索效率。
Thus, graph construction reduces redundant exploration and improves exploration efficiency. Appendix~\ref{app:exploration} formalizes that this improves expected correct-token efficiency and discovers more distinct semantic states than tree rollouts under the same budget.

\subsection{Graph Reward}
\label{sec:graph_reward}
% 中文译文 RLVR 只有最终答案奖励，信用分配稀疏；PRM 提供步骤级信号但需要额外标注或模型。GraphPO 利用图结构从最终答案中自发估计过程信号。
To turn sparse outcome rewards into process supervision, GraphPO builds the graph reward that shares node score in the same equivalent class. As shown in Figure~\ref{fig:main}d, it first estimates a state score for each node from verified answers, and then uses the value difference along each direct edge as the step reward. 
For node $v$, its own score comes from terminal states reachable through direct edges. 
If node $v$ belongs to an equivalence class, GraphPO pools the score of equivalent nodes because they represent similar reasoning states:
% 中文译文 为避免共享后继被重复计数，终止状态正确数仅沿原始生成边自底向上传播，得到节点自身的 Monte Carlo 统计。
% To avoid double counting through shared successors, correctness of terminal states is propagated only along original direct edges.
% \begin{equation}
%     c_v^{\mathrm{own}}=\sum_{z\in\mathcal{Z}(v)} R(z), \qquad t_v^{\mathrm{own}}=|\mathcal{Z}(v)|.
% \end{equation}
% 中文译文 等价类成员代表相近推理状态，其下游正确率可以互为证据；权重 $w<1$ 控制由错误合并带来的偏差。
\begin{equation}
\label{eq:combain}
    S(v)=\frac{c_v^{\mathrm{own}}+w\sum_{q\in\mathcal{Q}(v)\setminus\{v\}}c_q^{\mathrm{own}}}
    {t_v^{\mathrm{own}}+w\sum_{q\in\mathcal{Q}(v)\setminus\{v\}}t_q^{\mathrm{own}}},
\end{equation}
where $w\in[0,1]$ is a pooling coefficient that controls the contribution of downstream samples from equivalent nodes, $\mathcal{Q}(v)$ denotes the equivalence class containing $v$,
$c_q^{\mathrm{own}}=\sum_{z\in\mathcal{Z}(q)}R(z)$, and 
$t_q^{\mathrm{own}}=|\mathcal{Z}(q)|$. Here, $R(z)\in\{0,1\}$ is the verifier reward and 
$\mathcal{Z}(v)$ denotes the terminal states reachable from $v$ through original direct edges. When $\mathcal{Q}(v)={v}$, Eq.~\ref{eq:combain} is the the score of a non-merged node.
% It reduces to the Monte Carlo correctness rate when no partner exists.
% 中文译文 节点分数仍是状态级量。为得到步骤级奖励，GraphPO 对每条原始生成边 $e=(u,v)$ 计算两端节点的分数增益，并用两端语义相似度折扣只是复述前一状态的步骤。
To obtain a step-level reward, GraphPO computes the score gain on each direct edge $e(u,v)$ and discounts repetitive steps by endpoint similarity:
\begin{equation}
    r_{\mathrm{step}}(u,v)=\bigl(S(v)-S(u)\bigr)\bigl(1-\eta(u,v)\bigr),
\end{equation}
where $\eta(u,v)=\max(0,\operatorname{sim}(u,v))$. A step receives positive credit when it improves downstream correctness and adds new semantic content. Early-stopped branches can also receive evidence from equivalent partners. Appendix~\ref{app:variance} shows that pooling equivalent nodes yields a more accurate edge reward, and Appendix~\ref{app:prm} proves that $r_{\mathrm{step}}$ matches the expected policy-gradient direction of an oracle PRM. Thus, GraphPO obtains step-level process supervision from outcome rewards without annotated step labels.

\subsection{Dual-Group Graph Advantage}
\label{sec:graph_advantage}
% 中文译文 只在整段回答或同一前缀的局部续写之间比较，无法利用图汇聚带来的共享后继，也无法显式奖励更短的等价路径。GraphPO 因此引入双组优势。
% 基于图的结构优势，我们提出Dual-Group Graph Advantage估计。详细来说，通过到达equivalence class的reasoning path的组间比较，可以 improves reasoning efficiency ，通过node 的subsequent step组间比较，可以得到细粒度的credit assignment。
To improve reasoning efficiency and provide fine-grained credit assignment, we propose Dual-Group Graph Advantage estimation, which compares reasoning paths reaching the same equivalence class to encourage efficient reasoning, and compares subsequent steps from the same state to assign step-level credit (Figure~\ref{fig:main}e).

% 中文译文 正确性组用于比较从同一语义状态出发的候选 step。对源节点 $u$，组内边由图后继诱导得到，即 $\mathcal{E}_{\mathrm{cor}}(u)=\{(p(v),v)\mid v\in\mathcal{C}_g(u)\}$。其中共享后继仍使用产生它的原始生成边，这保证 reward 与真实生成上下文一致。
The step correctness group compares steps that \textbf{\emph{leave the same semantic state}}. For source node $u$, $\mathcal{E}_{\mathrm{cor}}(u)=\{(p(v),v)\mid v\in\mathcal{C}_g(u)\}$, where $p(v)$ indicates the predecessor node of $v$. For a shared node $v$, GraphPO computes the reward on its actual generated edge $(p(v),v)$, rather than on the virtual shared edge from $u$ to $v$. This keeps the reward tied to the real generation context. For each edge $e(p(v),v)\in\mathcal{E}_{\mathrm{cor}}(u)$, the \textbf{\emph{correctness advantage}} is:
\begin{equation}
    A_{\mathrm{cor}}(e)=
    \frac{r_{\mathrm{step}}(e)-\operatorname{mean}(\{r_{\mathrm{step}}(\hat{e})\mid \hat{e}\in\mathcal{E}_{\mathrm{cor}}(u)\})}
    {\operatorname{std}(\{r_{\mathrm{step}}(\hat{e})\mid \hat{e}\in\mathcal{E}_{\mathrm{cor}}(u)\})},
\end{equation}

% 中文译文 效率组用于比较到达同一语义状态的不同 edge paths。等价类 $Q$ 作为虚拟终点，$a_Q$ 是共享生成前驱，$\ell_v$ 是从 $a_Q$ 到 $v$ 的边数。负路径长度在组内标准化后形成 path-level advantage，再均匀分配给该路径上的边。
The efficiency group compares paths that \textbf{\emph{reach the same semantic state}}. For an equivalence class $Q$, let $a_Q$ be the deepest shared predecessor, $\ell_v$ is the number of direct edges from $a_Q$ to $v$, and $\bar{S}_Q=\sum_{v\in Q}S(v)/|Q|$. The path-level \textbf{\emph{efficiency advantage}} is:
\begin{equation}
    A_{\mathrm{eff}}(v)=\bar{S}_Q
    \frac{-\ell_v-\operatorname{mean}(\{-\ell_q\mid q\in Q\})}
    {\operatorname{std}(\{-\ell_q\mid q\in Q\})},
\end{equation}
and it is uniformly assigned to the edges on the path from $a_Q$ to $v$. The gate $\bar{S}_Q$ keeps this signal active only when the shared state has correctness support, so GraphPO prefers shorter paths only when they reach a useful semantic state. Pooling descendants from semantically equivalent paths into a single advantage group lowers the variance of $A_{\mathrm{cor}}$ (Appendix~\ref{app:variance}). Appendix~\ref{app:inference} also proves that this increases probability toward the shortest correct path and reduces expected token cost. 

\subsection{Policy Optimization}
\label{sec_policy_objective}
% 中文译文 GraphPO 用 PPO 风格裁剪目标训练；正确性优势作用在 response mask 上，效率优势只作用在等价类路径覆盖的 token 上，两者共享同一重要性比率。
% 我们最后将两个advantage combain到一起成为一个dual-group PPO objective来进行cridit assignment，从而提升推理效率并deriving process supervision from outcome
Finally, GraphPO trains the policy with a PPO-style objective. It combines correctness and efficiency into the dual-group graph advantage:
\begin{equation}
    A_{\mathrm{dual}}(e)=A_{\mathrm{cor}}(e)+\lambda_{\mathrm{eff}}A_{\mathrm{eff}}(e),
\end{equation}

where $\lambda_{\mathrm{eff}}\geq 0$ controls the efficiency signal, and $A_{\mathrm{eff}}(e)$ denotes the path-level efficiency advantage distributed onto edges along the path from $a_Q$ to $v$. Our method builds on DAPO~\citep{yu2025dapo} with:

\begin{equation}
\begin{aligned}
\mathcal{J}_{\mathrm{GraphPO}}(\theta)
={}&
\mathbb{E}_{\substack{(q,a)\sim\mathcal{D}\\ \mathcal{T}\sim\pi_{\theta_{\mathrm{old}}}(\cdot\mid q)}}
\Bigg[
\frac{1}{\textcolor{reviseorange}{|\mathcal{E}|}}
\sum_{\textcolor{reviseorange}{e\in\mathcal{E}}}
\frac{1}{\textcolor{reviseorange}{|e|}}
\sum_{\textcolor{reviseorange}{t=1}}^{\textcolor{reviseorange}{|e|}}
\bigg(
\min\Big(
r_{e,t}(\theta)\,\textcolor{reviseorange}{A_{\mathrm{dual}}(e)},\\[-1mm]
&\qquad\qquad
\operatorname{clip}\big(r_{e,t}(\theta),\,1-\varepsilon,\,1+\varepsilon\big)\,
\textcolor{reviseorange}{A_{\mathrm{dual}}(e)}
\Big)
-\beta\, D_{\mathrm{KL}}\!\left(\pi_{\theta}\,\Vert\,\pi_{\mathrm{ref}}\right)
\bigg)
\Bigg],
\end{aligned}
\end{equation}

where $r_{e,t}(\theta)=\frac{\pi_{\theta}(o_{e,t}\mid q,o_{e,<t})}{\pi_{\theta_{\mathrm{old}}}(o_{e,t}\mid q,o_{e,<t})}$ is the token-level importance ratio, $\mathcal{E}$ is the set of direct edges, and $|e|$ is the number of tokens on edge $e$. Each token on the same edge uses the same graph advantage, preserving the token-level PPO form.

\section{Experiment}
\label{sec:experiment}
In this section, we first describe the experimental setup, then we present the main results, and finally we conduct detailed analysis and ablation study.
\subsection{Experimental Setup}
\textbf{Datasets.}
% 我们在DAPO-Math dataset~\citep{yu2025dapo}数据集上进行训练。为了测试模型的推理能力，我们在AIME24, AIME25, and MATH500~\citep{hendrycks2021measuring} 三个数学数据集和two other reasoning benchmarks上GPQA~\citep{rein2023gpqa}、LiveCodeBench~\citep{jainlivecodebench}进行评估。
To evaluate reasoning ability, we conduct experiments on three mathematical reasoning benchmarks, including AIME24, AIME25, and MATH500~\citep{hendrycks2021measuring}. We also evaluate the models on two additional reasoning benchmarks, GPQA~\citep{rein2023gpqa} and LiveCodeBench~\citep{jainlivecodebench}.
To evaluate the effectiveness in LLM agentic RL, we test model on deep search tasks including General AI Assistant~\citep{mialongaia}, WebWalker~\citep{wu2025webwalker}, BrowseComp~\citep{wei2025browsecomp} and XBench~\citep{chen2025xbench}.

\textbf{Baselines.}
% 我们在Qwen2.5-7B-Math~\citep{yang2024qwen2}、Qwen3-8B-Base~\citep{yang2025qwen3}、Deepseek-R1-Distill-Qwen-7B~\citep{guo2025deepseek}三个模型上进行实验。为了对比，我们和经典的基于outcome reward的RLVR方法进行对比，包括GRPO~\citep{guo2025deepseek}和DAPO~\citep{yu2025dapo}。同时，我们还与一些基于树结构的方法进行对比，包括TreeRL~\citep{hou2025treerl}, SPO~\citep{guosegment}, TREE-GRPO~\citep{ji2025tree}, PROS~\citep{huang2026pros}, TreePO~\citep{yizhitreepo}。Detailed descriptions of these baselines are provided in Appendix~\ref{app:baseline}.
We conduct experiments on four LLMs, including Qwen2.5-7B-Instruct~\citep{qwen2025qwen25technicalreport}, Qwen2.5-7B-Math~\citep{yang2024qwen2}, Qwen3-8B-Base~\citep{yang2025qwen3}, and Deepseek-R1-Distill-Qwen-7B~\citep{guo2025deepseek}.
For comparison, we evaluate GraphPO against standard outcome-reward RLVR methods, including GRPO~\citep{guo2025deepseek} and DAPO~\citep{yu2025dapo}.
We also compare with tree-structured reasoning methods, including TreeRL~\citep{hou2025treerl}, SPO~\citep{guosegment}, TREE-GRPO~\citep{ji2025tree}, PROS~\citep{huang2026pros}, and TreePO~\citep{yizhitreepo}.
Detailed descriptions of these baselines are provided in Appendix~\ref{app:baseline}.
For GraphPO, we set the merging threshold  $\kappa$ to 0.92 and set pooling coefficient $w$ to 0.7. Train detail can be found in Appendix~\ref{app:exp_setup}.

\subsection{Main Results}
\begin{table*}[t]
\centering
\caption{Performance comparison of different methods on various reasoning benchmarks. We report Accuracy (\%) for each benchmark. The best performance for each benchmark is highlighted in \textbf{bold}.}
\resizebox{\textwidth}{!}{
\begin{tabular}{c|l|ccccc|c}
\toprule
\textbf{Model} & \textbf{Method} & \textbf{AIME24} & \textbf{AIME25} & \textbf{MATH500} & \textbf{GPQA} & \textbf{LiveCodeBench} & \textbf{Average} \\
% \midrule
% \multirow{3}{*}{\begin{tabular}[c]{@{}c@{}}\textbf{Qwen2.5-}\\\textbf{1.5B-Math}\end{tabular}}
% & Base &  &  &  &  &  &  \\
% & GRPO &  &  &  &  &  &  \\
% & DAPO &  &  &  &  &  &  \\
\midrule
\multirow{9}{*}{\begin{tabular}[c]{@{}c@{}}\textbf{Qwen2.5-}\\\textbf{7B-Math}\end{tabular}}
& Base & 13.6 & 6.3 & 54.8 & 28.5 & 5.7 & 21.8 \\
& GRPO & 23.3 & 15.5 & 79.5 & 31.8 & 11.4 & 32.3 \\
& DAPO & 25.7 & 17.2 & 82.6 & 32.3 & 12.6 & 34.1 \\
\cmidrule(lr){2-8}
& TreeRL & 26.1 & 18.9 & 84.2 & 33.6 & 12.5 & 35.1 \\
& SPO & 28.5 & 20.4 & 87.5 & 35.5 & 13.6 & 37.1 \\
& TREE-GRPO & 27.2 & 20.3 & 86.6 & 34.8 & 13.4 & 36.5 \\
& PROS & 29.1 & 21.0 & 88.1 & 36.6 & 13.5 & 37.7 \\
& TreePO & 26.8 & 19.0 & 85.5 & 34.0 & 13.0 & 35.7 \\
& GraphPO & \textbf{32.1} & \textbf{24.5} & \textbf{91.6} & \textbf{39.6} & \textbf{16.9} & \textbf{40.9} \\
\midrule
\multirow{9}{*}{\begin{tabular}[c]{@{}c@{}}\textbf{Qwen3-}\\\textbf{8B-Base}\end{tabular}} 
& Base & 29.3 & 20.8 & 79.7 & 39.1 & 22.9 & 38.4 \\
& GRPO & 30.3 & 21.3 & 80.0 & 43.4 & 27.2 & 40.4 \\
& DAPO & 32.3 & 23.8 & 82.7 & 44.0 & 27.6 & 42.1 \\
\cmidrule(lr){2-8}
& TreeRL & 32.3 & 25.1 & 83.4 & 44.4 & 28.1 & 42.7 \\
& SPO & 34.1 & 27.1 & 86.4 & 46.5 & 30.8 & 45.0 \\
& TREE-GRPO & 33.2 & 26.7 & 85.5 & 45.9 & 30.4 & 44.3 \\
& PROS & 34.3 & 27.5 & 86.9 & 47.6 & 31.1 & 45.5 \\
& TreePO & 33.1 & 25.4 & 84.7 & 45.0 & 29.4 & 43.5 \\
& GraphPO & \textbf{38.2} & \textbf{30.9} & \textbf{90.2} & \textbf{50.6} & \textbf{34.5} & \textbf{48.9} \\
\midrule
\multirow{9}{*}{\begin{tabular}[c]{@{}c@{}}\textbf{Deepseek-R1-}\\\textbf{Distill-Qwen-7B}\end{tabular}}  
& Base & 51.7 & 38.4 & 91.0 & 46.1 & 36.8 & 52.8  \\
& GRPO & 57.2 & 47.4 & 93.5 & 46.2 & 40.7 & 57.0 \\
& DAPO & 59.0 & 48.6 & 93.3 & 48.4 & 41.4 & 58.1 \\
\cmidrule(lr){2-8}
& TreeRL & 60.7 & 49.5 & 91.2 & 49.2 & 42.5 & 58.6 \\
& SPO & 62.6 & 50.7 & 92.8 & 50.5 & 44.7 & 60.3 \\
& TREE-GRPO & 61.7 & 50.5 & 92.3 & 49.9 & 44.1 & 59.7 \\
& PROS & 63.2 & 51.4 & 93.0 & 51.5 & 44.8 & 60.8 \\
& TreePO & 61.6 & 49.6 & 91.7 & 49.7 & 43.5 & 59.2 \\
& GraphPO & \textbf{66.2} & \textbf{54.7} & \textbf{96.3} & \textbf{54.5} & \textbf{48.3} & \textbf{64.0} \\
\bottomrule
\end{tabular}
}
\label{tab:main_results}
\end{table*}
% 使用相同token预算情况下：推理性能最优（大表）
% 在agent场景也可以使用。（小表）
% 训练的entrypy曲线，训练的测试集的acc曲线，训练的测试集回复长度曲线。
%
\begin{figure*}[t]
    \centering
    \includegraphics[width=\textwidth]{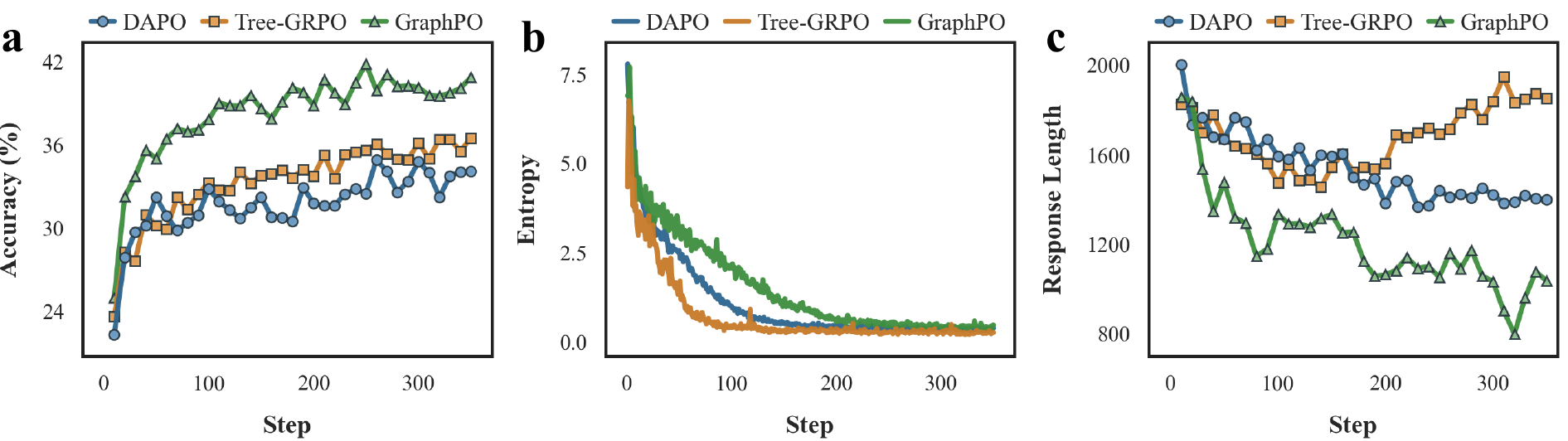}
    \caption{Training dynamics of Qwen2.5-7B-Math. \textbf{(a)} Accuracy. \textbf{(b)} Entropy. \textbf{(c)} Response length.}
    \vspace{-13pt}
    \label{fig:acc_ent_len}
\end{figure*}
% 可以发现，树方法普遍优于只依靠结果稀疏奖励的方法，这说明由outcome得到的过程信号对于提升推理性能是有效的。我们也能看到，GraphPO在所有模型和数据集上都优于树方法，这说明相比于tree structure，Graph structure能够更有效地利用过程信号来提升推理性能。这主要归功于GraphPO利用共享节点提高了rollout的利用率，同时通过polling得到了更优的过程监督。
% 图1a展示了训练过程中模型性能的变化。我们可以看到，GraphPO在训练初期就已经显著优于基线方法，并且在整个训练过程中保持领先。这表明GraphPO能够更有效地利用过程信号来提升推理性能。
\textbf{GraphPO Outperforms the Baseline Under the Same Token Budget.} Table~\ref{tab:main_results} reports model performance under the same token budget. Figure~\ref{fig:acc_ent_len}a shows the change in model performance during training. These results show that structure-based methods consistently outperform sparse outcome-reward baselines, confirming the effectiveness of outcome-derived process signals for reasoning. This improvement also comes from the fact that tree based and graph based methods can explore more reasoning paths under a same token budget, thereby collecting more trajectories. We can also see that GraphPO outperforms tree-based methods across all models and datasets. This indicates that, compared with the tree structure, the graph structure can use process signals more effectively to improve reasoning performance. This improvement mainly comes from better rollout utilization and lower-variance advantage estimation, achieved by sharing suffixes across equivalent nodes.

\textbf{GraphPO Preserves Strong Exploration. }Figure~\ref{fig:acc_ent_len}b shows the change of entropy during training. We observe that tree-based methods show the fastest entropy decay, chain-based methods are in the middle, and GraphPO decays the slowest. This is beacuse tree compare many local suffix from shared prefixes, which quickly amplifies high-reward branches and drives the policy toward a small set of reasoning patterns. In contrast, GraphPO pools downstream evidence across semantically equivalent states, allowing different paths to share support instead of competing independently. This reduces noisy over-amplification and preserves more semantically valid reasoning paths.

\begin{table*}[t]
\centering
\caption{Performance on deep search tasks for Qwen2.5-7B. The best results are indicated in \textbf{bold}.}
\label{tab:web_agent_qa_qwen7b}
\resizebox{\textwidth}{!}{
\begin{tabular}{l|cccc|cccc|c|c|c}
\toprule
\multirow{2}{*}{\textbf{Method}}
& \multicolumn{4}{c|}{\textbf{General AI Assistant}}
& \multicolumn{4}{c|}{\textbf{WebWalkerQA}}
& \textbf{BrowseComp}
& \textbf{XBench}
& \multirow{2}{*}{\textbf{Overall}} \\
\cmidrule(lr){2-5} \cmidrule(lr){6-9} \cmidrule(lr){10-10} \cmidrule(lr){11-11}
& \textbf{Lv.1} & \textbf{Lv.2} & \textbf{Lv.3} & \textbf{Avg.}
& \textbf{Easy} & \textbf{Med.} & \textbf{Hard} & \textbf{Avg.}
& \textbf{Avg.}
& \textbf{Avg.}
& \\
\midrule
ReAct       & 6.4  & 3.4  & 1.1 & 4.2  & 7.7 & 9.4  & 5.5  & 7.8  & 1.4 & 9.3 & 5.7\\
+ GRPO      & 17.2 & 14.2 & 4.3 & 14.1 & 8.7 & 11.5 & 10.7 & 10.6 & 2.3 & 10.5 & 9.4\\
+ DAPO      & 17.9 & 15.1 & 4.6 & 14.9 & 9.5 & 11.7 & 11.4 & 11.3 & 2.4 & 10.7 & 9.8\\
+ Tree-GRPO & 18.9 & 17.3 & 5.5 & 15.7 & 10.7 & 12.3 & 11.6 & 11.7 & 2.5 & 11.4 & 10.3\\
+ GraphPO   & \textbf{20.1} & \textbf{18.5} & \textbf{6.3} & \textbf{17.7}
            & \textbf{11.5} & \textbf{13.7} & \textbf{12.3} & \textbf{12.7} & \textbf{3.7} & \textbf{13.1} & \textbf{11.8}\\
\bottomrule
\end{tabular}
}
\end{table*}

\textbf{GraphPO Is Effective on Agent Tasks. }Agent tasks are naturally sequential decision making processes. They can be naturally decomposed into multiple complete decision steps. Therefore, tree and graph structures are well suited for reinforcement learning in agent tasks. To evaluate the effectiveness of GraphPO in agent scenarios, we use ReAct~\citep{yao2022react} as the agent framework. The results are shown in Table~\ref{tab:web_agent_qa_qwen7b}. GraphPO outperforms the baselines across all datasets, demonstrating its effectiveness in agent scenarios. 
\subsection{Detailed Analysis}
\textbf{GraphPO Improves Reasoning Efficiency by Reducing Redundant Paths.}
% 根据上面的进行分析
Figure~\ref{fig:acc_ent_len}c shows the response length during training. Tree-GRPO shows a non-monotonic trend, where response length first decreases and then increases. The initial decrease suggests that step-level comparisons reduce unnecessary exploration. The later increase occurs because deeper tree expansion creates more branching opportunities, which raises the probability of finding correct answers but also introduces redundant intermediate reasoning. Since the tree cannot detect or merge such redundancy, it cannot optimize repeated reasoning step, reducing inference efficiency. In contrast, GraphPO produces the shortest responses by merging semantically equivalent states and using an efficiency advantage to assign higher credit to shorter paths that reach the same state.

\textbf{GraphPO Outperforms PRM Methods Without Annotated Process Data.}
% 画个柱状图，横轴是方法，纵轴是性能。我们的方法和PRM方法进行比较。我们的方法不需要额外的标注数据，训练效率更高，性能更好。
% 在前文，我们已经证明过GraphPO的梯度更新方向和PRM方法的更新方向接近。在本section，我们进一步比较了GraphPO和PRM方法在推理性能上的差异。具体来说，我们follow ReasonFlux-PRM~\citep{zou2025reasonflux}基于Qwen2.5-7B-Instruct进行实验。具体实验结果如图~\ref{fig:detail}a所示. 可以看见，GraphPO不需要额外的标注数据仍旧优于PRM方法，这说明了从图结构中获取过程监督的合理性。
In the previous section, we have shown that the policy-gradient direction of GraphPO is close to that of PRM methods. In this section, we further compare GraphPO with PRM methods. Specifically, we follow ReasonFlux-PRM~\citep{zou2025reasonflux} and conduct experiments based on Qwen2.5-7B-Instruct. The results are shown in Figure~\ref{fig:detail}a. We can see that GraphPO still outperforms PRM methods without using additional annotated data. This shows that deriving process supervision from the graph structure is effective.

\textbf{GraphPO Outperforms the Baseline Under the Same Trajectory Budget.}
% 使用早停的策略，比较训练时间和性能。（一张柱状图，每个模型两个柱子，一个时间一个性能）（或者一个大表展示各种相同的trajectory数量下性能对比）
% 性能有提升，训练时间大幅降低。
% 在相同的token预算下，Tree和Graph结构由于共享前缀，相较于独立采样的方法能生成更多的response。在这一section，我们进一步研究生成相同数量trajectory的情况下模型的性能对比。我们让不同的方法均生成64 rollouts per prompt。结果展示在{fig:detail}b. 可以看见，在相同数量的trajectory下，GraphPO仍然优于基线方法，这说明GraphPO的性能提升不仅来自于生成更多的trajectory，更来自于更有效地利用这些trajectory进行优化。同时，我们也能看到，在相同数量的trajectory下，GraphPO的训练时间大幅降低，这说明GraphPO通过合并等价节点提高了rollout的利用率，从而减少了冗余探索，提高了训练效率。
Under the same token budget, Tree and Graph structures can generate more responses than independent sampling by sharing prefixes. We further compare different methods under the same number of trajectories, where each method generates 64 rollouts per prompt. As shown in Figure~\ref{fig:detail}b, GraphPO still outperforms the baselines, suggesting that its advantage comes not only from producing more trajectories, but also from deriving more effective process supervision. GraphPO also substantially reduces training time under the same trajectory count, indicating that merging equivalent nodes improves rollout utilization, reduces redundant exploration, and enhances training efficiency.

\textbf{GraphPO Maintains Efficient Trajectory Generation.}
% 生成trajectory速度（画个鱼泡图，大小是性能，横轴回复长度，纵轴生成速度）
% 我们进一步对比了不同方法在生成trajectory方面的效率。我们在MATH500数据集上统计了每个方法生成trajectory的平均速度和平均回复长度。The experiments are carried out on H20 GPUs without any parallelization. 结果展示在图~\ref{fig:detail}c. 可以看见，即使在rollout阶段使用了embedding 模型。GraphPO在生成trajectory方面表现出和Tree-based方法接近效率。这主要归功于GraphPO通过合并等价节点减少了冗余探索同时引导模型生成更简洁的推理路径，从而提高了生成效率。
We further compare the trajectory generation efficiency. All experiments are conducted on H20 GPUs without parallelization. As shown in Figure~\ref{fig:detail}c, GraphPO achieves trajectory generation efficiency comparable to tree-based methods, even though it uses an embedding model during rollout. This efficiency mainly comes from merging equivalent nodes, which reduces redundant exploration and encourages shorter reasoning paths.

\begin{figure*}[t]
    \centering
    \includegraphics[width=\textwidth]{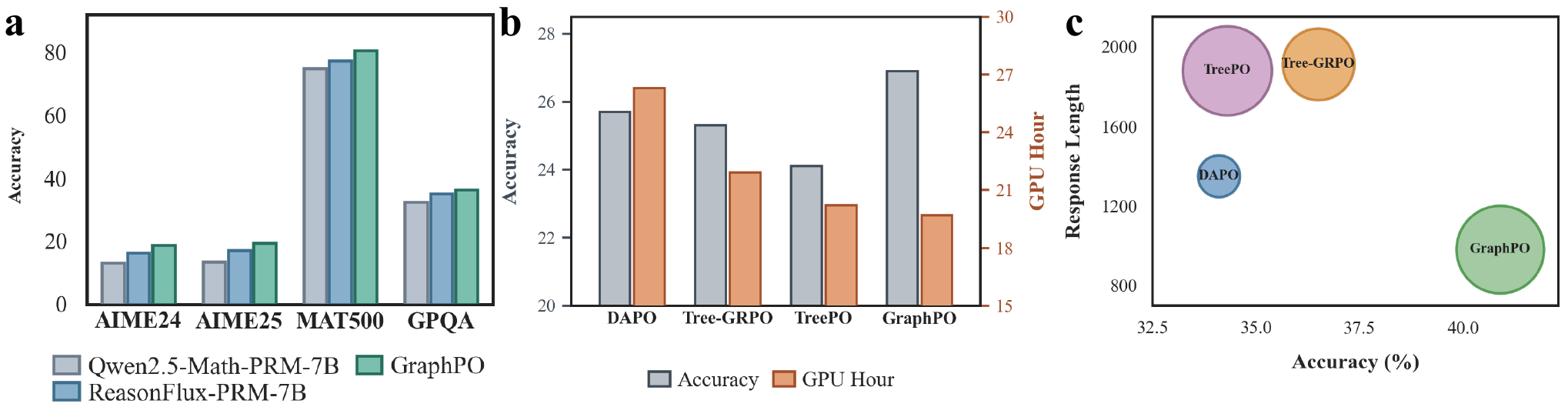}
    \caption{\textbf{(a)} PRM methods \emph{vs.} GraphPO. \textbf{(b)} Training Performance and Training Time. \textbf{(c)} Trajectory generation efficiency. The bubble size represents the number of trajectories generated per second.}
    \vspace{-13pt}
    \label{fig:detail}
\end{figure*}

\subsection{Ablation Study}
% ablation study：去掉efficiency advantage的回复长度，正确率小图，
% 合并阈值的影响（画个线图，横轴是合并阈值，纵轴是性能和效率，先升高后降低）（画训练的acc图，过大会在早期崩掉，过小退化为tree）
% 共享w值的影响
% 这三个组成一张图
% 合并阈值$\mathbf{z}_v$决定了节点合并的严格程度。In this section, we conduct ablation studies to analyze the impact of the merging threshold $\mathbf{z}_v$. 图2展示了随着阈值变化，GraphPO的性能变化。可以看见，随着合并阈值的增加，GraphPO的性能先上升后下降。这是因为较低的合并阈值会导致过度合并，抑制了合理的探索，从而降低性能；而较高的合并阈值则会导致过少的合并，无法充分利用图结构的优势，也会降低性能。当合并阈值为1时，Graph  structure退化为Tree structure，性能明显下降。
\begin{wrapfigure}{r}{0.35\textwidth}  % r表示图片在右边；l为左边
  \centering
  \vspace{-10pt}
  \includegraphics[width=0.35\textwidth]{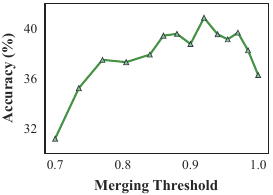}
    \vspace{-13pt}
  \caption{Performance under Different Merging Thresholds.}
  \label{fig:ablation_kappa}
    \vspace{-10pt}
\end{wrapfigure}
\textbf{Moderate Merging Thresholds Yield the Best GraphPO Performance. }The merging threshold $\kappa$ controls how strictly nodes are merged. In this section, we conduct ablation studies to analyze the impact of the merging threshold. 
Figure~\ref{fig:ablation_kappa} shows the performance under different merging thresholds. We can see that the performance of GraphPO first increases and then decreases as the merging threshold becomes larger. This is because a lower merging threshold leads to excessive merging. It suppresses reasonable exploration and thus reduces performance. In contrast, a higher merging threshold leads to too few merges. It prevents semantically equivalent reasoning paths from sharing their reasoning suffixes. When the merging threshold is 1, graph structure degenerates into a tree structure, leading to degraded performance.

% 去掉efficiency advantage的回复长度，正确率小图，
% 共享w值的影响（画训练的acc图，过大会在早期崩掉，过小退化为tree）

\section{Conclusion}
In this work, we propose GraphPO, a graph-based reinforcement learning framework for improving the capability of large reasoning models under verifiable outcome. Motivated by the semantic redundancy observed in chain- and tree-based rollouts, GraphPO represents rollouts as directed acyclic graph, where generated reasoning steps are edges and path-defined semantic states are nodes. By detecting and virtually merging semantically similar intermediate states, GraphPO reuses suffixes, reduces redundant expansions, and converts sparse outcome rewards into dense step-level process supervision. We further introduce a dual-group graph advantage in which the correctness group compares shared outgoing steps, while the efficiency group compares different incoming paths that reach the same semantic state, encouraging the policy to favor shorter paths toward useful reasoning states. Theoretical analysis shows that next-step sharing reduces advantage-estimation variance and the efficiency advantage favors shorter correct reasoning. Experiments across three large reasoning models, standard reasoning benchmarks, and agentic search tasks demonstrate that GraphPO consistently outperforms outcome-only and tree-based RL baselines under matched budgets, while maintaining efficient trajectory generation and improving response efficiency. These results suggest that graph-structured rollouts provide a practical way to obtain self-emergent process supervision from outcome rewards alone.

\bibliographystyle{unsrt}
\bibliography{neurips_2026}

%%%%%%%%%%%%%%%%%%%%%%%%%%%%%%%%%%%%%%%%%%%%%%%%%%%%%%%%%%%%
\newpage
\appendix
%%%%%%%%%%%%%%%%%zhanyuliang%%%%%%%%%%%%%%%%%%%%
\hypersetup{pageanchor=false} % avoid duplicate page anchors after appendix page reset
\setcounter{figure}{0}
\setcounter{table}{0}
\setcounter{page}{1}% 重置公式编号
\setcounter{equation}{0}
\setcounter{algorithm}{0}

\renewcommand{\theequation}{S.\arabic{equation}}
\renewcommand{\thefigure}{S.\arabic{figure}}
\renewcommand{\thetable}{S.\arabic{table}}
\renewcommand{\thealgorithm}{S.\arabic{algorithm}}
\renewcommand{\theHequation}{S.\arabic{equation}}
\renewcommand{\theHfigure}{S.\arabic{figure}}
\renewcommand{\theHtable}{S.\arabic{table}}
\renewcommand{\theHalgorithm}{S.\arabic{algorithm}}

%%%%%%%%%%%%%%%%%zhanyuliang%%%%%%%%%%%%%%%%%%%%
{\centering \Large \textbf{APPENDIX}}

\section{Algorithm}\label{app:algorithm}

We present the complete procedure of GraphPO in Algorithm~\ref{alg:graphpo}, which integrates the three stages introduced in the main text into a unified training loop.

\begin{algorithm}[H]
\caption{GraphPO Graph-based Policy Optimization}
\label{alg:graphpo}
\begin{algorithmic}[1]
\Require Policy $\pi_\theta$, prompts $\mathcal{D}$, segment length $L$, max layers $H$, samples per state $b$, convergence threshold $\kappa$, evidence-pooling weight $w$, efficiency coefficient $\lambda_{\mathrm{eff}}$
\For{each training iteration}
    \State Sample prompt $x\sim\mathcal{D}$; initialize root semantic state with $x$
    \Statex \hspace{\algorithmicindent}\textcolor{gray}{\(\triangleright\) \textit{Stage 1 Graph Construction (Section 3.2)}}
    \For{layer $h=1,\dots,H$}
        \For{each active semantic state $u$ at layer $h-1$}
            \State Sample $b_u$ continuations from $\pi_\theta(\cdot\mid \operatorname{prefix}(u))$; split them into segment-level transitions
            \State Register each new state $v$ with generation predecessor $p(v)=u$
            \State Extract structured summary and compute embedding $\mathbf{z}_v$ for each new state $v$
        \EndFor
        \For{each non-causal state pair $(u,v)$}
            \If{$\operatorname{sim}(u,v)\ge\kappa$}
                \State Union-Find links $u$ and $v$ into the same semantic equivalence class
            \EndIf
        \EndFor
        \State Build graph successors $\mathcal{C}_g(v)$ from direct successors and convergent-state successor sharing
        \State Reallocate the saved budget from longer routes to active non-redundant frontier states
    \EndFor
    \Statex \hspace{\algorithmicindent}\textcolor{gray}{\(\triangleright\) \textit{Stage 2 Graph Reward (Section 3.3)}}
    \State Evaluate terminal states with verifier and obtain $R(z)\in\{0,1\}$
    \State Propagate $c_v^{\mathrm{own}}, t_v^{\mathrm{own}}$ bottom-up along direct generation-provenance successors
    \State Compute node scores $S(v)\leftarrow\bigl(c_v^{\mathrm{own}}+w\textstyle\sum_{q\in\mathcal{Q}(v)\setminus\{v\}}c_q^{\mathrm{own}}\bigr)/\bigl(t_v^{\mathrm{own}}+w\textstyle\sum_{q\in\mathcal{Q}(v)\setminus\{v\}}t_q^{\mathrm{own}}\bigr)$ when the denominator is positive
    \State Compute edge rewards $r_{\mathrm{step}}(p(v),v)\leftarrow (S(v)-S(p(v)))\cdot(1-\eta(p(v),v))$
    \Statex \hspace{\algorithmicindent}\textcolor{gray}{\(\triangleright\) \textit{Stage 3 Dual-Group Graph Advantage (Section 3.4)}}
    \For{each semantic state $u$ with $|\mathcal{C}_g(u)|>1$}
        \State $A_{\mathrm{cor}}(e)\leftarrow (r_{\mathrm{step}}(e)-\operatorname{mean}(\{r_{\mathrm{step}}(\hat{e})\mid \hat{e}\in\mathcal{E}_{\mathrm{cor}}(u)\}))/(\operatorname{std}(\{r_{\mathrm{step}}(\hat{e})\mid \hat{e}\in\mathcal{E}_{\mathrm{cor}}(u)\}))$ for each $e\in\mathcal{E}_{\mathrm{cor}}(u)$
    \EndFor
    \For{each equivalence class $Q$ with $|Q|\ge 2$}
        \State Find deepest common generation predecessor $a_Q$; compute $\ell_v=d(v)-d(a_Q)$, $\bar{S}_Q=\operatorname{mean}(\{S(v)\mid v\in Q\})$
        \State $A_{\mathrm{eff}}(v)\leftarrow \bar{S}_Q\cdot(-\ell_v-\operatorname{mean}(\{-\ell_q\mid q\in Q\}))/(\operatorname{std}(\{-\ell_q\mid q\in Q\}))$; distribute uniformly to generated transitions on the route to $v$
    \EndFor
    \Statex \hspace{\algorithmicindent}\textcolor{gray}{\(\triangleright\) \textit{Stage 4 Policy Update}}
    \State Update $\theta$ by minimizing $\mathcal{L}_{\mathrm{ppo}}(A_{\mathrm{cor}},m_{\mathrm{resp}})+\lambda_{\mathrm{eff}}\mathcal{L}_{\mathrm{ppo}}(A_{\mathrm{eff}},m_{\mathrm{eff}})+\beta_{\mathrm{KL}}\mathcal{L}_{\mathrm{KL}}-c_H\mathcal{H}(\pi_\theta)$
\EndFor
\end{algorithmic}
\end{algorithm}

\section{Detailed Description of Baselines}
\label{app:baseline}
\textbf{GRPO~\citep{guo2025deepseek}:}
GRPO aims to remove the learned critic from PPO-style RLVR. It samples multiple complete responses for each prompt and normalizes verifier rewards within the group to obtain response-level advantages.

\textbf{DAPO~\citep{yu2025dapo}:}
DAPO aims to stabilize and scale GRPO for long-CoT reasoning. It keeps group-relative advantage estimation and improves training with decoupled clipping, dynamic sampling, token-level policy-gradient normalization, and overlong reward shaping.

\textbf{TreeRL~\citep{hou2025treerl}:}
TreeRL aims to improve exploration and credit assignment beyond independent response sampling. It builds an entropy-guided on-policy reasoning tree and estimates step values from the correctness ratio of descendant leaf responses.

\textbf{SPO~\citep{guosegment}:}
SPO aims to reduce the sparsity of response-level rewards. It partitions responses into segments, estimates segment-level advantages with Monte Carlo rollouts, and optimizes tokens using a probability-mask strategy.

\textbf{TREE-GRPO~\citep{ji2025tree}:}
TREE-GRPO aims to reuse shared prefixes and improve advantage estimation in tree-search RL. It was originally designed for agent trajectories; we adapt it to LLM reasoning by replacing agent interaction steps with reasoning segments while keeping its intra-tree and inter-tree grouped relative advantages.

\textbf{PROS~\citep{huang2026pros}:}
PROS aims to reduce redundant generation in RLVR by reusing rollout prefixes. It selects valuable historical prefixes, appends them to the original query as augmented prompts, and trains only on newly generated continuations.

\textbf{TreePO~\citep{yizhitreepo}:}
TreePO aims to improve rollout efficiency with tree modeling. It decodes responses in fixed-length segments, reuses shared prefixes through dynamic tree sampling, prunes low-value paths, and estimates segment advantages from global and local tree subgroups.

\section{Detailed Experimental Setup}
\label{app:exp_setup}
% 我们的GraphPO基于Verl~\citep{sheng2024hybridflow}框架搭建。在rollout阶段，使用vLLM框架进行高效的批量推理~\citep{kwon2023efficient}。我们的实验基于16*H20 GPUs进行。
\textbf{Framework and Hardware.}
For reasoning tasks, Our GraphPO is built on the Verl framework~\citep{sheng2024hybridflow}. During the rollout stage, we use the vLLM framework for efficient batched inference~\citep{kwon2023efficient}. 
In agentic RL, Our implementation is built upon Search-R1~\citep{jin2025search}.
Our experiments are conducted on 16$\times$H20 GPUs.

% 至于训练数据，我们使用DAPO-Math dataset对模型进行强化学习。在rollout过程中，我们让Chain方法（即GRPO、DAPO）每次rollout64条轨迹。对于Tree-based方法（即TreeRL、SPO、TREE-GRPO、PROS、TreePO）和GraphPO，我们固定每一个step的长度，然后使用和Chain方法接近的token预算进行rollout。具体计算方式为：
\textbf{Training Data.}
For training data in reasoning tasks, we use the DAPO-Math dataset~\citep{yu2025dapo} for reinforcement learning.
For agent tasks, we adopt the experimental setup of Tree-GRPO~\citep{ji2025tree}, using ASearcher-35K~\citep{gao2025beyond} and WebDancer~\citep{wu2025webdancer} as training data.

\textbf{Rollout Budget.}
In reasoning tasks, chain methods, including GRPO and DAPO, generate 64 trajectories for each prompt.
In deep search task, we use group size 8 for all chain-based RL

For tree-based methods, including TreeRL, SPO, TREE-GRPO, PROS, and TreePO, as well as GraphPO, we fix the length of each step and use a token budget close to that of chain methods for rollout. 
Taking a ternary tree as an example, the calculation is as follows.
Assume that the tree has $n$ layers and each branch step contains $s$ tokens.
Then each complete root-to-leaf trajectory has length $ns$, and the token budget of chain sampling with 64 trajectories is $B_{\mathrm{chain}} = 64ns.$
For the ternary tree, the number of generated branches at layer $i$ is $3^i$.
Therefore, the total rollout cost of constructing the tree is $B_{\mathrm{tree}}=s\sum_{i=1}^{n}3^i.$

% 我们设置DeepSeek-R1-Distill-Qwen-7B和 Qwen3-8BBase, we set a context length of 16384 tokens. For Qwen2.5-Math-7B, we use its maximum supported length of 4,096 tokens. Tree-based方法和GraphPO每一步大小我们设置为512到2048.
\textbf{Context Length and Step Size.}
For reasoning tasks, we set the context length to 16,384 tokens for DeepSeek-R1-Distill-Qwen-7B~\citep{guo2025deepseek} and Qwen3-8B-Base~\citep{yang2025qwen3}. For Qwen2.5-Math-7B~\citep{yang2024qwen2}, we use its maximum supported context length of 4,096 tokens. For tree based methods and GraphPO, we set the length of each step from 512 to 2048 tokens.
For deep search task, The max response length of Qwen2.5-7B-Instruct~\citep{qwen2025qwen25technicalreport} is set to 8000 tokens.

% 我们使用off policy RL设置进行训练。每个batch size取512，minibatch size为32。正文中的每个step为一个batch size。所有的模型我们都训练350step，使用AdamW优化器，学习率为1e-6，权重衰减为0.01。对于PPO的clip参数， The learning rate is set to 1×10−6 , with AdamW betas (0.9,0.999), weight decay 0.01, no warmup, PPO clip ratio 0.2, gradient clipping 1.0,(0.9,0.999), weight decay 0.01, no warmup, PPO clip ratio 0.2, gradient clipping 1.0。Actor 模块通过完全分片数据并行 (FSDP) 进行有效训练

\textbf{Optimization.}
We train the models under an off policy RL setting. For reasoning tasks, the batch size is set to 512, and the minibatch size is set to 32. In the main text, each training step corresponds to one batch. We train all models for 350 steps. 
For deep search task, we train 40 steps. The batch size is set to 128, and the minibatch size is set to 64.
We use the AdamW optimizer with a learning rate of $1\times10^{-6}$, betas of $(0.9, 0.999)$, and a weight decay of 0.01. We use no learning rate warmup. For PPO, we set the clip ratio to 0.2 and the gradient clipping threshold to 1.0. The actor module is trained efficiently with Fully Sharded Data Parallel.

\begin{table*}[t]
\centering
\caption{Performance comparison of different branching schemes on various reasoning benchmarks with {Qwen2.5-7B-Math}. We report Accuracy (\%) for each benchmark. The best performance for each benchmark is highlighted in \textbf{bold}.}
\resizebox{\textwidth}{!}{
\begin{tabular}{l|ccccc|c}
\toprule
\textbf{Method} & \textbf{AIME24} & \textbf{AIME25} & \textbf{MATH500} & \textbf{GPQA} & \textbf{LiveCodeBench} & \textbf{Average} \\
\midrule

Base & 13.6 & 6.3 & 54.8 & 28.5 & 5.7 & 21.8 \\
GRPO & 23.3 & 15.5 & 79.5 & 31.8 & 11.4 & 32.3 \\
DAPO & 25.7 & 17.2 & 82.6 & 32.3 & 12.6 & 34.1 \\

GraphPO ($\mathcal{T}=\{4,5,3,2\}$)       & 29.4 & 21.4 & 87.8 & 36.5 & 15.1 & 38.0 \\
GraphPO ($\mathcal{T}=\{7,6,5,4,3\}$)     & 31.0 & 23.2 & 90.0 & 38.3 & 16.1 & 39.7 \\
GraphPO ($\mathcal{T}=\{5,4,2,2,2\}$)     & 31.6 & 24.0 & 90.9 & 39.1 & 16.6 & 40.4 \\
GraphPO ($\mathcal{T}=\{2,4,3,2,2,2\}$)   & 30.2 & 22.3 & 89.0 & 37.4 & 15.6 & 38.9 \\
GraphPO ($\mathcal{T}=\{2,5,4,2,2\}$)   & \textbf{32.1} & \textbf{24.5} & \textbf{91.6} & \textbf{39.6} & \textbf{16.9} & \textbf{40.9} \\
\bottomrule
\end{tabular}
}
\label{tab:branch}
\end{table*}

\section{GraphPO Remains Effective Across Diverse Branching Strategies}
% 不同分支请情况下性能变化。表
% 构建RL Graph时每层每个节点生成多少个节点是非常关键的参数。根据Figure~\citep{fig:emperical}a可以发现，在推理刚开始时候不同的推理路径之间往往存在着较大的相似度，而在推理后期，各条推理路径差异逐渐变大。所以在实验过程中，我们采用的是逐层降低分支数量的分支方法，In this section，我们分析不同的设置下模型的性能。具体结果可见表~\ref{tab:branch}. 可以看见，逐层降低分支数量的分支方法能够取得更好的性能。这是因为在推理初期，较多的分支能够促进更多样化的探索，而在推理后期，较少的分支能够减少冗余探索，提高推理效率。同时，所有的分支方法都优于独立采样的方法，这说明在推理过程中引入分支能够促进更多样化的探索，从而提升推理性能。并且我们可以看出并不是分支数量越多越好，在token 预算相同的情况下，过多的分支会导致探索深度降低，从而思考时间缩短，无法找到更好的路径。
The number of steps generated by each reasoning path at each layer is a critical parameter when constructing the RL graph. According to Figure~\ref{fig:empirical}a and b, different reasoning paths tend to show high similarity at the early stage of reasoning. In the later stage, these reasoning paths gradually become more different. Therefore, in our experiments, we adopt a branching strategy that gradually reduces the number of branches across layers.

In this section, we analyze model performance under different settings. The results are shown in Table~\ref{tab:branch}. Here, $\mathcal{T}=\{b_1,b_2,\dots,b_H\}$ denotes the per-layer branching schedule of the reasoning graph: $H$ is the total number of layers, and $b_h$ specifies the number of child continuations sampled from each active state at layer $h$. For example, $\mathcal{T}=\{4,5,3,2\}$ corresponds to a $4$-layer graph in which each root state expands $4$ children at the first layer, every layer-$1$ state expands $5$ children at the second layer, and so on, with the branching factor decreasing toward deeper layers. We can see that the strategy with gradually decreasing branches achieves better performance. This is because more branches in the early reasoning stage can encourage more diverse exploration. In contrast, fewer branches in the later reasoning stage can reduce redundant exploration and improve reasoning efficiency.

At the same time, all branching strategies outperform independent sampling. This shows that introducing branches during reasoning can encourage more diverse exploration and improve reasoning performance. We can also see that using more branches is not always better. Under the same token budget, too many branches reduce the exploration depth. As a result, the model has less reasoning time and may fail to find better reasoning paths. Moreover, the comparison between $\mathcal{T}=\{4,5,3,2\}$ and the uniform schedule $\mathcal{T}=\{4,4,4,4\}$ further confirms that allocating more capacity to mid-stage branching, where reasoning paths begin to diverge, is more beneficial than spreading branches uniformly across all layers. This observation aligns with our empirical study in Section~\ref{sec:Empiricalstudy}, where the divergence of reasoning trajectories peaks in the middle stage rather than at the very beginning or the very end.

\section{Ablation Study}
\label{app:ablation}
% 去掉efficiency advantage的回复长度，正确率小图，
\begin{wrapfigure}{r}{0.35\textwidth}  % r表示图片在右边；l为左边
  \centering
  \vspace{-10pt}
  \includegraphics[width=0.35\textwidth]{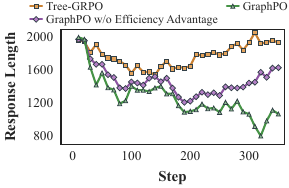}
    \vspace{-13pt}
  \caption{Impact of efficiency advantage on response length.}
  \label{fig:ablation_length}
    \vspace{-10pt}
\end{wrapfigure}
\textbf{Efficiency advantage improves reasoning efficiency.}
% 为了证明加入efficiency advantage后能够提升模型的推理效率，我们进一步分析了efficiency advantage对模型回复长度的影响。实验结果如图~\ref{fig:ablation_length}所示。我们可以看出不使用efficiency advantage后，在训练过程中模型的回复长度和Tree-based 方法一样先下降后升高。这主要是因为更长的回复会带来更多的分支机会，从而让模型学会冗余探索的pattern，降低推理效率。同时我们也可以发现即使不使用efficiency advantage，GraphPO的回复长度总体仍旧会比tree更低。因为在融合节点时会降低长推理路径的采样数量，导致学习到的后续信息很多来自于更短路径的后续推理，所以最后学习得到的更倾向于学习短路径的回答模式。
To show that the efficiency advantage improves reasoning efficiency, we further analyze its effect on response length. The results are shown in Figure~\ref{fig:ablation_length}. 
We can see that, without the efficiency advantage, the response length first decreases and then increases during training, similar to tree based methods. This is mainly because longer responses create more branching opportunities. As a result, the model may learn redundant exploration patterns, which reduces reasoning efficiency.
We can also observe that even without the efficiency advantage, GraphPO still produces shorter responses than tree based methods overall. This is because node merging reduces the sampling budget of longer reasoning paths. Therefore, much of the learned downstream information comes from shorter paths. As a result, the final model tends to learn response patterns based on shorter reasoning paths.

% 共享w值的影响
% 去掉efficiency advantage的回复长度，正确率小图，
\begin{wrapfigure}{r}{0.35\textwidth}  % r表示图片在右边；l为左边
  \centering
  \vspace{-10pt}
  \includegraphics[width=0.35\textwidth]{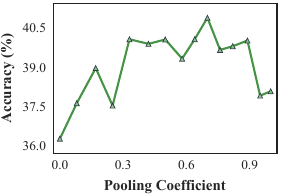}
    \vspace{-13pt}
  \caption{Impact of pooling coefficient.}
  \label{fig:ablation_w}
    \vspace{-10pt}
\end{wrapfigure}
\textbf{Moderate Pooling Coefficients Strike the Best Bias-Variance Balance.} The pooling coefficient $w\in[0,1]$ controls how strongly the suffixes shared from equivalent nodes contribute to a node's value estimate. In this section, we conduct ablation studies to analyze the impact of the pooling coefficient. Figure~\ref{fig:ablation_w} shows the performance of GraphPO under different pooling coefficients. We can see that the performance first increases and then decreases as $w$ becomes larger. This is because a small pooling coefficient prevents the graph from sharing suffixes across equivalent states, so the advantage estimator inherits the high variance of tree-based estimators with few terminal samples per node and yields noisy gradients that slow convergence. In contrast, a large pooling coefficient forces the value estimate to rely heavily on suffixes from semantically similar but not identical nodes. The residual semantic heterogeneity is amplified into a non-negligible merging bias, which distorts the process reward and hurts policy optimization. When $w=0$, the equivalence-class suffix sharing is disabled and GraphPO degenerates to a tree-based estimator, and the performance drops clearly. A moderate value around $w=0.7$ best balances the variance reduction obtained from shared suffixes and the semantic-merging bias, which is consistent with the theoretical analysis in Lemma~\ref{lem:graphvar}.

% 不同分支请情况下性能变化。表
\section{Theoretical Analysis}\label{app:theory}

In this section we provide formal theoretical justification for the four claims made in the main text:
(i) compared with tree-based methods that cannot reuse nodes, equivalence-class suffix sharing pools downstream evidence from all semantically equivalent paths into a single advantage-estimation group, which yields lower-variance node-score and advantage estimates at the cost of a controllable $O(1{-}\kappa)$ semantic-merging bias (Section~\ref{app:variance});
(ii) GraphPO improves expected exploration efficiency under a fixed token budget when budget saved
from redundant continuations is redirected to novel frontiers (Section~\ref{app:exploration});
(iii) in expectation, the GraphPO surrogate trained with $r_{\mathrm{step}}$ has the same first-order update direction as a novelty-gated oracle PRM surrogate, and thus yields self-emergent process supervision purely from outcome rewards (Section~\ref{app:prm});
and (iv) the dual-group graph advantage breaks outcome-objective ties in favor of shorter
semantically equivalent correct reasoning paths (Section~\ref{app:inference}).

\subsection{Notation and Assumptions}\label{app:notation}
Let $x$ denote a prompt, $\pi_\theta$ the policy, and $\tau=(s_0,e_1,s_1,\ldots,e_T,s_T)$ a reasoning
trajectory with reasoning steps $e_t$ as edges and semantic states $s_t$ as nodes.
Let $V^\star(s)\!=\!\mathbb{E}_{\tau\sim\pi_\theta\mid s_0=s}[R(\tau)]$ be the
true value function under the verifier reward $R\in\{0,1\}$.
Let $\phi:\Sigma^\ast\!\to\!\mathbb{R}^d$ denote the embedder, with semantic similarity
$\operatorname{sim}(u,v)=\langle \phi(u),\phi(v)\rangle/(\|\phi(u)\|\|\phi(v)\|)$ and
$\kappa\in(0,1)$ the merging threshold.

\begin{assumption}[Value-stable detected equivalence classes]\label{ass:lip}
There exists a class-stability radius $\delta_\kappa=O(1-\kappa)$ such that for any two
states $u,v$ in the same detected equivalence class,
\begin{equation}
|V^\star(u)-V^\star(v)|\le \delta_\kappa .
\end{equation}
This is the class-level version of a Lipschitz semantic-equivalence condition. It avoids
treating a union-find transitive closure as automatically pairwise close unless the detected
class itself has small semantic diameter.
\end{assumption}

\begin{assumption}[Bounded rewards and i.i.d.\ rollouts inside a group]\label{ass:bounded}
Verifier rewards are bounded in $[0,1]$. Conditional on a parent state $u$, the children
generated by $\pi_\theta(\cdot\mid u)$ are i.i.d., and their downstream completions are
independent given the children. The own-subtree terminal samples used in a pooled score are
sampled independently and are disjoint across equivalence-class members.
\end{assumption}

\begin{assumption}[No-cycle merging]\label{ass:nocycle}
Equivalence classes $\mathcal{Q}(v)$ are formed only between non-causal pairs, so the merged
structure remains a DAG and the bottom-up propagation in Section~\ref{sec:graph_reward}
is well-defined.
\end{assumption}

These assumptions are consistent with standard analysis of RLVR/GRPO-style estimators
\citep{kazemnejad2024vineppo,setlurrewarding,cui2025process} and with the structural
constraints of GraphPO described in Section~\ref{sec:graph_construction}.
Throughout the analysis, variances are conditional on the sampled graph structure and on the
observed terminal rollout counts.

\subsection{Lower-Variance Advantage Estimation via Shared Suffixes}\label{app:variance}

We first analyze the estimation variance of the advantage estimators used by GRPO, TreeRPO, and GraphPO. Trees expand semantically equivalent states as independent nodes, so each state's score is estimated only from its own subtree. GraphPO instead merges these states into an equivalence class and pools their terminal samples, which enlarges the effective sample size of each node-score estimate and of the standardization group used to form the advantage. Both effects reduce the variance of the resulting advantage estimate.

\paragraph{Estimators.}
For a state $u$ with $K$ children, let $\hat V_i$ be the empirical correctness rate of the
$i$-th child obtained from $m$ terminal rollouts. Define:
\begin{equation}
\begin{aligned}
\hat A^{\mathrm{GRPO}}_i &= \hat R_i - \tfrac{1}{K}\textstyle\sum_{j=1}^K \hat R_j,\\
\hat A^{\mathrm{Tree}}_i &= \hat V_i - \tfrac{1}{K}\textstyle\sum_{j=1}^K \hat V_j,\\
\hat A^{\mathrm{Graph}}_i &= \hat V^{Q}_i - \tfrac{1}{K}\textstyle\sum_{j=1}^K \hat V^{Q}_j,\quad
\hat V^{Q}_i = \frac{c_i^{\mathrm{own}}+w\!\sum_{q\in\mathcal{Q}(i)\setminus\{i\}}c_q^{\mathrm{own}}}
{t_i^{\mathrm{own}}+w\!\sum_{q\in\mathcal{Q}(i)\setminus\{i\}}t_q^{\mathrm{own}}}.
\end{aligned}
\end{equation}
GRPO uses raw outcome rewards on full trajectories.
TreeRPO uses Monte-Carlo correctness of each child computed only from its own subtree.
GraphPO additionally pools terminal evidence across the equivalence class $\mathcal{Q}(i)$.
These are the raw pre-normalization estimators; the standardization in
Section~\ref{sec:graph_advantage} rescales the signal after the sampling estimate is formed.

\begin{lemma}[Per-child variance]\label{lem:perchild}
Under Assumptions~\ref{ass:lip}--\ref{ass:bounded}, with $m$ Monte-Carlo terminal rollouts per child,
\begin{equation}
\operatorname{Var}\!\bigl(\hat V_i\bigr)\;=\;\frac{V^\star(i)(1-V^\star(i))}{m}.
\end{equation}
\end{lemma}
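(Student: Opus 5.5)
The plan is to write $\hat V_i$ as an empirical mean of Bernoulli outcomes and apply the variance formula for an i.i.d.\ average. Conditional on the child state $i$, the estimator is
\begin{equation*}
\hat V_i=\frac{1}{m}\sum_{k=1}^{m}R(z_k),
\end{equation*}
where $z_1,\dots,z_m$ are the terminal states of the $m$ Monte-Carlo completions rolled out from $i$ along direct edges. This is exactly $c_i^{\mathrm{own}}/t_i^{\mathrm{own}}$ with $t_i^{\mathrm{own}}=m$ and no pooling. As stated at the end of Section~\ref{app:notation}, variances are taken conditional on the sampled graph structure and on the observed terminal counts. Hence $m$ and the identity of the state $i$ are fixed, and the only randomness left is in the completions.

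First, I identify the law of each summand. Under Assumption~\ref{ass:bounded}, the completions from $i$ are generated by $\pi_\theta(\cdot\mid i)$ and are independent given $i$. Since $R\in\{0,1\}$, each $R(z_k)$ is a Bernoulli random variable. Its mean is $\mathbb{E}[R(z_k)\mid i]=V^\star(i)$, by the definition $V^\star(s)=\mathbb{E}_{\tau\sim\pi_\theta\mid s_0=s}[R(\tau)]$ in Section~\ref{app:notation}. Therefore $\operatorname{Var}(R(z_k)\mid i)=V^\star(i)(1-V^\star(i))$.

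Second, I use independence. The cross-covariances vanish, so
\begin{equation*}
\operatorname{Var}(\hat V_i)=\frac{1}{m^2}\sum_{k=1}^{m}\operatorname{Var}(R(z_k))=\frac{V^\star(i)(1-V^\star(i))}{m}.
\end{equation*}
Assumption~\ref{ass:lip} plays no role in this step. It is only listed because the lemma is stated under the common hypotheses, and it will be needed later when pooled estimators are compared.

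The main obstacle is justifying that the $m$ terminal samples really are i.i.d.\ draws with mean $V^\star(i)$. In the graph these completions sit inside a subtree, so leaves that share an intermediate descendant of $i$ are correlated. If the $m$ rollouts arise from such multi-level branching, the exact variance would carry extra positive covariance terms. To handle this, I will read ``$m$ Monte-Carlo terminal rollouts per child'' as $m$ independent full completions from $i$, which is the independence clause of Assumption~\ref{ass:bounded} applied with $i$ as the parent. Under that reading, the identity is exact.

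I will also note that the budget-halving reallocation makes $m$ data-dependent. Conditioning on the observed counts, as stipulated in Section~\ref{app:notation}, takes care of this. Without the conditioning, the formula would become $\mathbb{E}[1/m]\,V^\star(i)(1-V^\star(i))$.
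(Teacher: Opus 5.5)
Your proposal is correct and matches the paper's proof: both write $\hat V_i$ as the empirical mean of $m$ i.i.d.\ Bernoulli$(V^\star(i))$ terminal indicators, justified by Assumption~\ref{ass:bounded}, and apply the variance formula $p(1-p)/m$ for a Bernoulli mean. Your caveat is fair: leaves that share an intermediate descendant would be correlated, so the identity needs the $m$ rollouts to be independent full completions from $i$. The paper does not raise this point and implicitly relies on the same reading of Assumption~\ref{ass:bounded}.
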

\begin{proof}
$\hat V_i$ is the empirical mean of $m$ i.i.d.\ Bernoulli($V^\star(i)$) terminal indicators by
Assumption~\ref{ass:bounded}. The variance of a Bernoulli mean is $p(1-p)/m$.
\end{proof}

\begin{lemma}[Variance with equivalence-class pooling]\label{lem:graphvar}
Let $|\mathcal{Q}(i)|=k_i\!\ge\!1$ and suppose each member of $\mathcal{Q}(i)$ has $m$
own terminal rollouts. Define
\begin{equation}
\rho_i=\frac{1+(k_i-1)w^2}{(1+w(k_i-1))^2}\le \frac{1}{1+w(k_i-1)} .
\end{equation}
Then
\begin{equation}
\operatorname{Var}(\hat V^{Q}_i)
\;\le\;
\rho_i\,\frac{\max_{q\in\mathcal{Q}(i)}V^\star(q)(1-V^\star(q))}{m},
\end{equation}
and, since $p(1-p)$ is 1-Lipschitz on $[0,1]$,
\begin{equation}
\operatorname{Var}(\hat V^{Q}_i)
\;\le\;
\rho_i\,\frac{V^\star(i)(1-V^\star(i))+\delta_\kappa}{m},
\end{equation}
and its semantic-merging bias satisfies
\begin{equation}
\bigl|\mathbb{E}[\hat V^{Q}_i]-V^\star(i)\bigr|\le \delta_\kappa .
\end{equation}
Consequently, the mean-squared error relative to $V^\star(i)$ is bounded by the variance
term above plus $\delta_\kappa^2$.
\end{lemma}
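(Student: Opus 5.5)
The plan is to treat $\hat V^{Q}_i$ as a weighted average of independent per-member empirical means, conditioning on the sampled graph and on the terminal counts, as the analysis stipulates. With $t_q^{\mathrm{own}}=m$ for every $q\in\mathcal{Q}(i)$, put weights $\alpha_i=1$ and $\alpha_q=w$ for $q\neq i$, and let $W=1+w(k_i-1)$. Then
\[
\hat V^{Q}_i=\sum_{q\in\mathcal{Q}(i)}\frac{\alpha_q}{W}\,\hat V_q ,
\]
where $\hat V_q=c_q^{\mathrm{own}}/m$. By Assumption~\ref{ass:bounded} the own-subtree samples are disjoint and independent across members, so
\[
\operatorname{Var}(\hat V^{Q}_i)=\sum_q \frac{\alpha_q^2}{W^2}\operatorname{Var}(\hat V_q).
\]
Lemma~\ref{lem:perchild} gives $\operatorname{Var}(\hat V_q)=V^\star(q)(1-V^\star(q))/m$. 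Bounding each term by the maximum over $q$ and using $\sum_q\alpha_q^2=1+(k_i-1)w^2$ yields the first inequality with exactly $\rho_i$.

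Next I would verify $\rho_i\le 1/W$. This is equivalent to $1+(k_i-1)w^2\le 1+w(k_i-1)$, i.e.\ to $w^2\le w$, which holds for $w\in[0,1]$. One could also view it as Cauchy--Schwarz, since equal weights minimize the normalized sum of squares, but the direct comparison is simpler.

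For the second variance bound, take $q^\ast$ to be the maximizing member. Since $p(1-p)$ is 1-Lipschitz on $[0,1]$, we have $V^\star(q^\ast)(1-V^\star(q^\ast))\le V^\star(i)(1-V^\star(i))+|V^\star(q^\ast)-V^\star(i)|$. The last term is at most $\delta_\kappa$ by Assumption~\ref{ass:lip}, because $q^\ast$ and $i$ lie in the same detected class. For the bias, linearity gives $\mathbb{E}[\hat V^{Q}_i]=\sum_q(\alpha_q/W)V^\star(q)$, a convex combination. Hence
\[
\bigl|\mathbb{E}[\hat V^{Q}_i]-V^\star(i)\bigr|\le\sum_q(\alpha_q/W)\,|V^\star(q)-V^\star(i)|\le\delta_\kappa ,
\]
again by Assumption~\ref{ass:lip}. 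The MSE statement then follows from the decomposition $\mathrm{MSE}=\operatorname{Var}+\mathrm{bias}^2$.

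The main obstacle is justifying that each $\hat V_q$ is an unbiased Bernoulli mean of $V^\star(q)$ with the stated variance. Terminal samples in a subtree are generated hierarchically through intermediate children, so they are not literally i.i.d.\ draws from $q$. Budget halving could also make the counts $t_q^{\mathrm{own}}$ data-dependent. I would handle this exactly as the paper's convention does: condition on the graph structure and the counts, and read Assumption~\ref{ass:bounded} together with Lemma~\ref{lem:perchild} as supplying $m$ independent Bernoulli$(V^\star(q))$ indicators per member. I would note explicitly that this conditioning is where the claimed bounds rely on the assumptions rather than on the actual sampling dynamics.
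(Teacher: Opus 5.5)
Your proposal is correct and follows the paper's proof essentially line for line. The matching steps are the weights $\alpha_q\in\{1,w\}$ normalized by $W$, the independence decomposition $\sum_q\alpha_q^2\operatorname{Var}(\hat V_q)/W^2$ with Lemma~\ref{lem:perchild}, the bound $\rho_i\le 1/W$ via $w^2\le w$, the 1-Lipschitz step for the second variance bound, and the convex-combination bias bound. One small correction to your aside: the Cauchy--Schwarz fact that equal weights minimize the normalized sum of squares gives the lower bound $\rho_i\ge 1/k_i$, not the upper bound $\rho_i\le 1/W$. The direct comparison you actually use, equivalently $\sum_q\alpha_q^2\le\max_q\alpha_q\sum_q\alpha_q=W$, is the right justification.
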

\begin{proof}
For balanced groups, write
$\hat V^{Q}_i=\sum_{q\in\mathcal{Q}(i)}\alpha_q\hat V_q$, where
$\alpha_i=1/(1+w(k_i-1))$, $\alpha_q=w/(1+w(k_i-1))$ for $q\ne i$, and
$\sum_q\alpha_q=1$. By Assumption~\ref{ass:bounded}, the $\hat V_q$ are mutually independent,
so
\begin{equation}
\operatorname{Var}(\hat V^{Q}_i)
=\sum_{q\in\mathcal{Q}(i)}\alpha_q^2\operatorname{Var}(\hat V_q).
\end{equation}
Using Lemma~\ref{lem:perchild} gives the variance bound because
$\sum_q\alpha_q^2=\rho_i$. The second inequality in the definition of $\rho_i$ follows from
$w^2\le w$ for $w\in[0,1]$. The alternative variance bound follows from
Assumption~\ref{ass:lip} and the fact that $p(1-p)$ is 1-Lipschitz on $[0,1]$. For the bias,
\begin{equation}
\left|\mathbb{E}[\hat V^{Q}_i]-V^\star(i)\right|
\le \sum_{q\in\mathcal{Q}(i)}\alpha_q|V^\star(q)-V^\star(i)|
\le \delta_\kappa
\end{equation}
by Assumption~\ref{ass:lip}.
\end{proof}

\begin{theorem}[GraphPO yields lower-variance advantage estimates via shared suffixes]\label{thm:var}
Under Assumptions~\ref{ass:lip}--\ref{ass:nocycle}, suppose the $K$ child-score estimates
used in the group baseline are conditionally independent and have balanced own terminal
rollouts. Let $\rho_{\max}(u)=\max_{i\le K}\rho_i$. Then
\begin{equation}
\begin{aligned}
\operatorname{Var}\!\bigl(\hat A^{\mathrm{Graph}}_i\bigr)
\;&\le\;\rho_{\max}(u)\,
\operatorname{Var}\!\bigl(\hat A^{\mathrm{Tree}}_i\bigr)
+O\!\left(\frac{\rho_{\max}(u)\delta_\kappa}{m}\right)\\
\;&\le\;\rho_{\max}(u)\,
\operatorname{Var}\!\bigl(\hat A^{\mathrm{GRPO}}_i\bigr)
+O\!\left(\frac{\rho_{\max}(u)\delta_\kappa}{m}\right),
\end{aligned}
\end{equation}
where the second inequality compares to a GRPO estimator that uses a single terminal outcome
per child. The estimator's bias relative to the oracle advantage is $O(\delta_\kappa)$, and
its MSE has an additional $O(\delta_\kappa^2)$ term. If every child in the comparison group
has $k_i>1$ and $w>0$, then $\rho_{\max}(u)<1$, so the variance reduction from suffix sharing is
strict whenever the semantic-heterogeneity term is smaller than the saved sampling variance, i.e.\ GraphPO returns a lower-variance advantage estimate than tree-based baselines that cannot reuse nodes.
\end{theorem}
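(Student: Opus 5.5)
Because the three estimators share the same group-centering structure, the comparison reduces to per-child variances. The plan is to write the variance of each advantage explicitly and then invoke Lemma~\ref{lem:perchild} and Lemma~\ref{lem:graphvar}.

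\textbf{Step 1: generic variance of a centered estimator.} For any estimator of the form $\hat A_i=\hat X_i-\frac1K\sum_j\hat X_j$ with conditionally independent $\hat X_j$ (the hypothesis of the theorem), write $\hat A_i=(1-\tfrac1K)\hat X_i-\tfrac1K\sum_{j\ne i}\hat X_j$. Then
\begin{equation}
\operatorname{Var}(\hat A_i)=\Bigl(1-\tfrac1K\Bigr)^2\operatorname{Var}(\hat X_i)+\tfrac1{K^2}\sum_{j\ne i}\operatorname{Var}(\hat X_j).
\end{equation}
This is a fixed nonnegative linear combination of the per-child variances, with the same coefficients for GRPO, Tree and Graph. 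Each comparison therefore reduces to a termwise comparison of per-child variances.

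\textbf{Step 2: Graph versus Tree.} Lemma~\ref{lem:graphvar} gives $\operatorname{Var}(\hat V^Q_j)\le\rho_j\,(V^\star(j)(1-V^\star(j))+\delta_\kappa)/m$. Lemma~\ref{lem:perchild} gives $\operatorname{Var}(\hat V_j)=V^\star(j)(1-V^\star(j))/m$. Hence
\begin{equation}
\operatorname{Var}(\hat V^Q_j)\le\rho_{\max}(u)\operatorname{Var}(\hat V_j)+\rho_{\max}(u)\delta_\kappa/m.
\end{equation}
Substituting into the Step~1 formula and noting that the coefficients sum to $(1-\tfrac1K)^2+\tfrac{K-1}{K^2}\le1$ yields the first inequality, with remainder $O(\rho_{\max}\delta_\kappa/m)$.

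\textbf{Step 3: Tree versus GRPO.} A single terminal outcome $\hat R_j$ is Bernoulli$(V^\star(j))$, so its variance is $V^\star(j)(1-V^\star(j))\ge V^\star(j)(1-V^\star(j))/m$. Applying the same linear combination gives $\operatorname{Var}(\hat A^{\mathrm{Tree}}_i)\le\operatorname{Var}(\hat A^{\mathrm{GRPO}}_i)$. Chaining this with Step~2 gives the second inequality.

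\textbf{Step 4: bias and strictness.} The bias follows by linearity of expectation. The oracle advantage is $V^\star(i)-\frac1K\sum_j V^\star(j)$, and each $|\mathbb E\hat V^Q_j-V^\star(j)|\le\delta_\kappa$. The bias of $\hat A^{\mathrm{Graph}}_i$ is therefore at most $(1-\tfrac1K)\delta_\kappa+\tfrac{K-1}{K}\delta_\kappa\le2\delta_\kappa$. MSE equals variance plus squared bias, which supplies the $O(\delta_\kappa^2)$ term. For strictness, if $k_i>1$ and $w>0$, then $\rho_i\le 1/(1+w(k_i-1))<1$, so $\rho_{\max}<1$. The bound is then strictly below the tree variance exactly when $\rho_{\max}\delta_\kappa/m<(1-\rho_{\max})\operatorname{Var}(\hat A^{\mathrm{Tree}}_i)$, which is the stated condition.

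\textbf{Main obstacle.} The delicate point is the independence across pooled estimators. Two children of $u$ may lie in the same equivalence class, or their classes may share members. In that case the $\hat V^Q_j$ reuse the same terminal samples and are correlated, so the variance of the centered estimator is no longer the simple sum in Step~1. I would lean on the theorem's explicit hypothesis that the child-score estimates are conditionally independent. I would also remark that when two children share a class, the positive correlation only reduces the variance of the difference $\hat V^Q_i-\hat V^Q_j$. A covariance-inclusive version should therefore still satisfy the bound, but I would state it under the hypothesis rather than prove the general case.
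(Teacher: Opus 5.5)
Your proof is correct and is essentially the paper's own argument. It uses the same centered-variance decomposition under the stated independence hypothesis, Lemma~\ref{lem:graphvar} plus Lemma~\ref{lem:perchild} for the Graph-versus-Tree contraction, the single-Bernoulli versus $m$-sample-mean comparison for Tree versus GRPO (the paper phrases this via the law of total variance), and the bias part of Lemma~\ref{lem:graphvar} for the $O(\delta_\kappa)$ bias.

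Your closing aside is wrong, however, so the independence hypothesis is essential, not a convenience. In $\operatorname{Var}(\hat A_i)$, the covariance between two baseline-only children $j,l\neq i$ enters with coefficient $+2/K^2$. Take $K=3$, $w=1$, all true values equal (so there is no $\delta_\kappa$ slack), $j$ and $l$ merged with each other, and $i$ merged with an outside node $q$. Then $\hat A^{\mathrm{Graph}}_i=\tfrac13(\hat V_i+\hat V_q-\hat V_j-\hat V_l)$, with variance $\tfrac49\sigma^2>\tfrac12\cdot\tfrac69\sigma^2=\rho_{\max}(u)\operatorname{Var}(\hat A^{\mathrm{Tree}}_i)$, where $\sigma^2$ is the common own-estimator variance. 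So the covariance-inclusive version of the bound fails, and you are right to rest on the stated independence hypothesis, as the paper does.
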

\begin{proof}
For independent child-score estimates $X_1,\ldots,X_K$,
\begin{equation}
\operatorname{Var}\!\left(X_i-\frac{1}{K}\sum_{j=1}^K X_j\right)
=\left(1-\frac{1}{K}\right)^2\operatorname{Var}(X_i)
+\frac{1}{K^2}\sum_{j\ne i}\operatorname{Var}(X_j).
\end{equation}
All coefficients are nonnegative. Lemma~\ref{lem:graphvar} contracts each child-score
variance by at most $\rho_{\max}(u)$ relative to the non-pooled Tree estimator, up to the
$O(\rho_{\max}(u)\delta_\kappa/m)$ heterogeneity term induced by imperfect semantic merging.
The same bound therefore applies after subtracting the group baseline. A Tree estimator averages
$m$ terminal Bernoulli outcomes per child, while the corresponding GRPO estimator uses one
terminal outcome per child; the law of total variance therefore gives
$\operatorname{Var}(\hat A^{\mathrm{Tree}}_i)\le
\operatorname{Var}(\hat A^{\mathrm{GRPO}}_i)$. The bias and MSE statements follow from the
bias part of Lemma~\ref{lem:graphvar}.
\end{proof}

\begin{corollary}[More accurate edge rewards via aggregation over equivalent partners]\label{cor:processvar}
The step-level edge reward $r_{\mathrm{step}}(u,v)=(S(v)-S(u))(1-\eta(u,v))$ inherits the
estimation accuracy of its endpoint scores. Define the oracle edge reward
$r^\star_{\mathrm{step}}(u,v)=(V^\star(v)-V^\star(u))(1-\eta(u,v))$ and the empirical
plug-in $\hat r^{\mathrm{Graph}}_{\mathrm{step}}(u,v)$ obtained by replacing $S$ with the
pooled estimator $\hat V^{Q}$ from Lemma~\ref{lem:graphvar}, and analogously $\hat r^{\mathrm{Tree}}_{\mathrm{step}}$ from the non-pooled tree estimator. Let
$\Gamma^{\mathrm{Graph}}_{uv}=\operatorname{Cov}(\hat V^{Q}(u),\hat V^{Q}(v))$ and
$\Gamma^{\mathrm{Tree}}_{uv}=\operatorname{Cov}(\hat V^{\mathrm{Tree}}(u),\hat V^{\mathrm{Tree}}(v))$.
Then the mean-squared error of the GraphPO edge reward to the oracle satisfies
\begin{equation}
\begin{aligned}
\operatorname{MSE}\bigl(\hat r^{\mathrm{Graph}}_{\mathrm{step}}(u,v)\bigr)
\le{}&(1-\eta(u,v))^2
\Bigl(\rho_u\operatorname{Var}(\hat V^{\mathrm{Tree}}(u))
+\rho_v\operatorname{Var}(\hat V^{\mathrm{Tree}}(v))
-2\Gamma^{\mathrm{Graph}}_{uv}\Bigr)\\
&+\,(1-\eta(u,v))^2\,(2\delta_\kappa)^2
+O\!\left(\frac{(\rho_u+\rho_v)\delta_\kappa}{m}\right).
\end{aligned}
\end{equation}
Here $\rho_u,\rho_v\in(0,1]$ are the endpoint pooling factors from Lemma~\ref{lem:graphvar},
the first line bounds the variance contributed by the two endpoint estimators (which is
strictly reduced by aggregating own-subtree evidence with that of equivalent partners),
and the $(2\delta_\kappa)^2$ term bounds the squared bias induced by imperfect semantic
merging. Compared with the tree edge reward, which uses no information from equivalent
partners, the GraphPO edge reward is strictly more accurate (in MSE) whenever the variance
saving from shared evidence dominates the residual semantic-heterogeneity bias:
\begin{equation}
\begin{aligned}
&(1-\eta(u,v))^2
\Bigl((1-\rho_u)\operatorname{Var}(\hat V^{\mathrm{Tree}}(u))
+(1-\rho_v)\operatorname{Var}(\hat V^{\mathrm{Tree}}(v))
-2(\Gamma^{\mathrm{Tree}}_{uv}-\Gamma^{\mathrm{Graph}}_{uv})\Bigr)\\
&\quad>\;
(1-\eta(u,v))^2(2\delta_\kappa)^2
+O\!\left(\frac{(\rho_u+\rho_v)\delta_\kappa}{m}\right).
\end{aligned}
\end{equation}
In the conditionally independent endpoint case the covariance term vanishes and the
condition reduces to nontrivial pooling ($w>0$, $|\mathcal{Q}|>1$) with small semantic
heterogeneity ($\delta_\kappa$ small).
\end{corollary}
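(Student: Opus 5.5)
Since $\eta(u,v)$ is computed from the frozen embedder and is fixed once the graph is fixed (all variances here are conditional on the sampled graph structure), I will treat $c:=1-\eta(u,v)$ as a deterministic constant. Then $\hat r^{\mathrm{Graph}}_{\mathrm{step}}(u,v)-r^\star_{\mathrm{step}}(u,v)=c\,(D-D^\star)$, where $D=\hat V^{Q}(v)-\hat V^{Q}(u)$ and $D^\star=V^\star(v)-V^\star(u)$. The proof starts from the bias--variance decomposition
\begin{equation}
\operatorname{MSE}\bigl(\hat r^{\mathrm{Graph}}_{\mathrm{step}}(u,v)\bigr)=c^2\Bigl(\operatorname{Var}(D)+\bigl(\mathbb{E}[D]-D^\star\bigr)^2\Bigr),
\end{equation}
and bounds the two terms separately.

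\textbf{Variance term.} Expanding gives $\operatorname{Var}(D)=\operatorname{Var}(\hat V^{Q}(u))+\operatorname{Var}(\hat V^{Q}(v))-2\Gamma^{\mathrm{Graph}}_{uv}$. I apply Lemma~\ref{lem:graphvar} to each endpoint. In its exact form (before taking the max), the lemma gives $\operatorname{Var}(\hat V^{Q}(u))=\sum_q\alpha_q^2\,V^\star(q)(1-V^\star(q))/m$. By Assumption~\ref{ass:lip} and the 1-Lipschitz property of $p(1-p)$, each $V^\star(q)(1-V^\star(q))$ differs from $V^\star(u)(1-V^\star(u))$ by at most $\delta_\kappa$. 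Lemma~\ref{lem:perchild} identifies $V^\star(u)(1-V^\star(u))/m$ with $\operatorname{Var}(\hat V^{\mathrm{Tree}}(u))$. Together these give
\begin{equation}
\operatorname{Var}(\hat V^{Q}(u))\le\rho_u\operatorname{Var}(\hat V^{\mathrm{Tree}}(u))+\rho_u\delta_\kappa/m,
\end{equation}
and likewise for $v$. The covariance is kept as is, which yields the first line of the bound together with the $O((\rho_u+\rho_v)\delta_\kappa/m)$ remainder.

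\textbf{Bias term.} The bias part of Lemma~\ref{lem:graphvar} gives $|\mathbb{E}\hat V^{Q}(x)-V^\star(x)|\le\delta_\kappa$ for $x\in\{u,v\}$. By the triangle inequality, $|\mathbb{E}[D]-D^\star|\le 2\delta_\kappa$, which produces the $c^2(2\delta_\kappa)^2$ term. This completes the MSE bound.

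\textbf{Comparison with the tree estimator.} The tree plug-in is unbiased in this model, since each $\hat V^{\mathrm{Tree}}$ is a Bernoulli mean. Hence
\begin{equation}
\operatorname{MSE}(\hat r^{\mathrm{Tree}}_{\mathrm{step}})=c^2\bigl(\operatorname{Var}(\hat V^{\mathrm{Tree}}(u))+\operatorname{Var}(\hat V^{\mathrm{Tree}}(v))-2\Gamma^{\mathrm{Tree}}_{uv}\bigr).
\end{equation}
Subtracting the graph upper bound from this exact expression shows that $\operatorname{MSE}(\hat r^{\mathrm{Graph}}_{\mathrm{step}})<\operatorname{MSE}(\hat r^{\mathrm{Tree}}_{\mathrm{step}})$ whenever the displayed inequality holds, which is the claimed sufficient condition. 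For the independent-endpoint case, both $\Gamma^{\mathrm{Tree}}_{uv}$ and $\Gamma^{\mathrm{Graph}}_{uv}$ vanish. Nontrivial pooling ($w>0$ and $|\mathcal{Q}|>1$) gives $\rho<1$ strictly, because $1+(k-1)w^2<(1+w(k-1))^2$ when $w>0$ and $k>1$. The left-hand side is therefore strictly positive and dominates for small $\delta_\kappa$.

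\textbf{Main obstacle.} The step I expect to be most delicate is the covariance term. In the graph, $u$ is typically an ancestor of $v$, so their own-subtree samples overlap, and pooling can change the covariance in either direction. I will not try to bound $\Gamma^{\mathrm{Graph}}_{uv}$. Instead I carry it explicitly, as the statement does, and only argue that it vanishes in the conditionally independent case. A secondary point is that Lemma~\ref{lem:graphvar} assumes balanced groups with $m$ rollouts per member. I will state the corollary under that same balance assumption rather than redo the weights for unequal $t_q^{\mathrm{own}}$.
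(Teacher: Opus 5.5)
Your proposal is correct and follows the paper's proof essentially step for step. Both arguments use the bias--variance decomposition of the linear plug-in $(1-\eta(u,v))\bigl(\hat V^{Q}(v)-\hat V^{Q}(u)\bigr)$, apply the Lipschitz form of Lemma~\ref{lem:graphvar} at each endpoint to get the $\rho_x\operatorname{Var}(\hat V^{\mathrm{Tree}}(x))$ terms plus the $O(\rho_x\delta_\kappa/m)$ remainder, use the triangle inequality for the $(2\delta_\kappa)^2$ bias, and subtract from the exact, unbiased tree MSE; your explicit derivations of the remainder and of $\rho<1$ under nontrivial pooling only fill in details the paper leaves implicit.
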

\begin{proof}
Decompose MSE into variance and squared bias. The reward is a deterministic linear
transformation of the two endpoint score estimates:
$\hat r_{\mathrm{step}}-r^\star_{\mathrm{step}}=(1-\eta)\bigl((\hat S(v)-V^\star(v))-(\hat S(u)-V^\star(u))\bigr)$,
hence
\begin{equation}
\begin{aligned}
\operatorname{MSE}(\hat r_{\mathrm{step}})
={}&(1-\eta)^2\Bigl(\operatorname{Var}(\hat S(v))+\operatorname{Var}(\hat S(u))
-2\operatorname{Cov}(\hat S(u),\hat S(v))\Bigr)\\
&+(1-\eta)^2\bigl(\mathbb{E}\hat S(v)-V^\star(v)-(\mathbb{E}\hat S(u)-V^\star(u))\bigr)^2.
\end{aligned}
\end{equation}
For the GraphPO estimator, applying the variance bound of Lemma~\ref{lem:graphvar} to each
endpoint variance gives the first line of the bound; applying the bias bound
$|\mathbb{E}\hat V^{Q}(\cdot)-V^\star(\cdot)|\le\delta_\kappa$ at both endpoints and the
triangle inequality gives the $(2\delta_\kappa)^2$ squared-bias term. The tree estimator is
unbiased (zero bias term) but cannot pool over equivalent partners, so its variance term uses
$\rho=1$. Subtracting the two MSE expressions yields the stated comparison condition: GraphPO
trades a small $O(\delta_\kappa^2)$ bias for an $\Omega(1-\rho)$ variance reduction, and is
strictly more accurate whenever the latter dominates.
\end{proof}

\paragraph{Dual-group variance reduction.}
The dual-group graph advantage further benefits from equivalence-class pooling because the
standardization group itself is enlarged. In a tree estimator that cannot reuse nodes, the
group used to standardize the correctness advantage at state $u$ contains only the direct
children $\mathcal{C}(u)$. In GraphPO, paths reaching equivalent states are merged, so the
group becomes $\mathcal{E}_{\mathrm{cor}}(u)=\{(p(v),v)\mid v\in\mathcal{C}_g(u)\}$ with
$|\mathcal{C}_g(u)|=\sum_{q\in\mathcal{Q}(u)}|\mathcal{C}(q)|$.

\begin{proposition}[Lower correctness-advantage variance via equivalence-class grouping]\label{prop:groupvar}
Fix a source state $u$ and let $K_T=|\mathcal{C}(u)|$ and $K_G=|\mathcal{C}_g(u)|\ge K_T$ be
the tree- and graph-group sizes. Assume the per-edge step-reward estimates
$\{\hat r_{\mathrm{step}}(\hat e)\}$ are conditionally independent within the group with a
common variance bound $\sigma^2_r$. Let $\hat A^{\mathrm{Tree}}_{\mathrm{cor}}$ and
$\hat A^{\mathrm{Graph}}_{\mathrm{cor}}$ be the pre-standardization centered advantages
$\hat r_{\mathrm{step}}(e)-\operatorname{mean}(\cdot)$ computed on the tree and graph
groups, respectively. Then
\begin{equation}
\operatorname{Var}\!\bigl(\hat A^{\mathrm{Graph}}_{\mathrm{cor}}\bigr)
\;\le\;\Bigl(1-\tfrac{1}{K_G}\Bigr)\sigma^2_r
\;\le\;\Bigl(1-\tfrac{1}{K_T}\Bigr)\sigma^2_r
=\operatorname{Var}\!\bigl(\hat A^{\mathrm{Tree}}_{\mathrm{cor}}\bigr),
\end{equation}
with strict inequality whenever $K_G>K_T$ (i.e.\ at least one non-trivial equivalence-class
member contributes shared descendants).
\end{proposition}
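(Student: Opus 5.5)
The plan is a direct computation of the variance of a centered sample, using the identity already written out in the proof of Theorem~\ref{thm:var}. Write the group as $X_1,\dots,X_K$ with $X_j=\hat r_{\mathrm{step}}(\hat e_j)$, where $K\in\{K_T,K_G\}$, and let $e=\hat e_i$. These are conditionally independent by hypothesis, so
\begin{equation}
\operatorname{Var}\!\Bigl(X_i-\tfrac{1}{K}\textstyle\sum_{j}X_j\Bigr)
=\Bigl(1-\tfrac{1}{K}\Bigr)^2\operatorname{Var}(X_i)+\tfrac{1}{K^2}\textstyle\sum_{j\ne i}\operatorname{Var}(X_j).
\end{equation}
Bounding every $\operatorname{Var}(X_j)$ by $\sigma_r^2$ gives
\begin{equation}
\Bigl[(1-\tfrac{1}{K})^2+\tfrac{K-1}{K^2}\Bigr]\sigma_r^2=\tfrac{K-1}{K}\,\sigma_r^2=\Bigl(1-\tfrac1K\Bigr)\sigma_r^2 .
\end{equation}
Applying this with $K=K_G$ gives the first inequality.

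The middle inequality comes from the monotonicity of $K\mapsto 1-1/K$ together with $K_G\ge K_T$. The inclusion $K_G\ge K_T$ holds because $\mathcal{C}(u)\subseteq\mathcal{C}_g(u)$ by the definition of $\mathcal{C}_g$. Strictness when $K_G>K_T$ is immediate, provided $\sigma_r^2>0$.

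The final equality, $\operatorname{Var}(\hat A^{\mathrm{Tree}}_{\mathrm{cor}})=(1-1/K_T)\sigma_r^2$, is the delicate step. The same computation gives equality only if each tree-group edge has variance exactly $\sigma_r^2$, that is, if $\sigma_r^2$ is the common variance rather than merely an upper bound. I will therefore read the assumption as homoscedastic within the tree group: the direct children of $u$ are i.i.d. by Assumption~\ref{ass:bounded}, so their step-reward estimates share a common variance, and I set $\sigma_r^2$ equal to it. Under this reading the tree identity is exact. The graph bound still holds as an inequality because the shared edges have variance at most $\sigma_r^2$. If the shared members can have larger variance, the bound must be stated with $\sigma_r^2$ taken as the maximum over the graph group. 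In that case the equality on the right weakens to ``$\le$'' against a worst-case tree variance.

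One further subtlety concerns the pooled scores $S(\cdot)$. Because of pooling, the rewards of shared edges depend on the same terminal samples, which threatens independence. I will simply invoke the proposition's explicit conditional-independence hypothesis here rather than derive it.
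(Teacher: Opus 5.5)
Your computation of $\operatorname{Var}\bigl(X_i-\tfrac1K\sum_j X_j\bigr)\le(1-\tfrac1K)\sigma_r^2$ is correct and is exactly the paper's computation. Reading $\sigma_r^2$ as a common variance, not merely a bound, is also what the paper's proof silently does. The step that fails is the middle inequality. The map $K\mapsto 1-\tfrac1K$ is \emph{increasing}: it takes the values $0,\tfrac12,\tfrac23,\dots$. So $K_G\ge K_T$ gives $(1-\tfrac1{K_G})\sigma_r^2\ge(1-\tfrac1{K_T})\sigma_r^2$, strictly when $K_G>K_T$, which is the opposite of what you need.

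The paper's proof makes the same slip: it asserts that $(1-\tfrac1K)\sigma^2$ ``decreases monotonically in $K$''. The slip cannot be repaired, because the proposition is false under its own hypotheses. In the homoscedastic case your identity gives $\operatorname{Var}(\hat A^{\mathrm{Graph}}_{\mathrm{cor}})=(1-\tfrac1{K_G})\sigma_r^2>(1-\tfrac1{K_T})\sigma_r^2=\operatorname{Var}(\hat A^{\mathrm{Tree}}_{\mathrm{cor}})$ whenever $K_G>K_T$. For example, with $K_T=1$ the tree advantage is identically $0$, while $K_G=2$ gives variance $\sigma_r^2/2$.

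Letting the shared edges be less noisy does not help. For an edge $e$ of the tree group, drop the shared-edge variance terms entirely; the remaining terms still give
\[
\operatorname{Var}\bigl(\hat A^{\mathrm{Graph}}_{\mathrm{cor}}(e)\bigr)\ge\Bigl[\bigl(1-\tfrac{1}{K_G}\bigr)^2+\tfrac{K_T-1}{K_G^2}\Bigr]\sigma_r^2=\Bigl(1-\tfrac{1}{K_T}\Bigr)\sigma_r^2+\frac{(K_G-K_T)^2}{K_T K_G^2}\,\sigma_r^2 ,
\]
so no reading of $\sigma_r^2$ rescues the claimed direction. Your ``strictness is immediate'' remark therefore also points the wrong way.

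The reason is that the self-inclusive centered value is a shrunken leave-one-out contrast: $X_i-\tfrac1K\sum_jX_j=(1-\tfrac1K)\bigl(X_i-\tfrac1{K-1}\sum_{j\ne i}X_j\bigr)$. Its small variance at small $K$ is shrinkage of the signal, not a better baseline. If you want a true statement of the intended ``larger group, better baseline'' effect, prove it for a quantity without this shrinkage. Under the same independence and common-variance hypotheses, the leave-one-out advantage has variance $\bigl(1+\tfrac{1}{K-1}\bigr)\sigma_r^2$, which is strictly decreasing for $K\ge 2$. The baseline $\operatorname{mean}(\cdot)$ itself has variance $\sigma_r^2/K$.

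Your worry about independence is also well founded, and not only because of pooling. Every tree-group reward $r_{\mathrm{step}}(u,v)$ contains the same term $S(u)$. The conditional-independence hypothesis is therefore a genuine restriction, not a consequence of the i.i.d.\ rollout assumption.
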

\begin{proof}
For independent variates $X_1,\dots,X_K$ with common variance $\sigma^2$,
\begin{equation}
\operatorname{Var}\!\bigl(X_i-\tfrac{1}{K}\textstyle\sum_{j=1}^K X_j\bigr)
=\bigl(1-\tfrac{1}{K}\bigr)^2\sigma^2+\tfrac{K-1}{K^2}\sigma^2
=\bigl(1-\tfrac{1}{K}\bigr)\sigma^2,
\end{equation}
which decreases monotonically in $K$. Applying this with $K\!\in\!\{K_T,K_G\}$ and using
$K_G\ge K_T$ yields the inequality.
\end{proof}

Proposition~\ref{prop:groupvar} formalizes the dual-group claim of
Section~\ref{sec:graph_advantage}: by aggregating semantically equivalent paths into one
correctness-advantage group, GraphPO obtains a more concentrated baseline and standard
deviation than tree-based estimators that cannot reuse nodes, which directly lowers the
variance of $A_{\mathrm{cor}}$.

\subsection{Improved Exploration Efficiency}\label{app:exploration}

We now formalize the exploration-efficiency claim of Section~\ref{sec:Empiricalstudy}.
Let the rollout budget be $B$ tokens and let $\mathcal S_B$ be the sampled reasoning steps.
Define the cumulative useful exploration as
\begin{equation}
G(B)=\sum_{s\in\mathcal S_B}|s|\mathbf{1}[s\text{ correct}],
\end{equation}
so $\bar{\mathrm{Eff}}(B)=\mathbb{E}[G(B)]/B$ is the expected per-budget version of
Eq.~\eqref{eq:eq1} when the consumed budget is $B$. Let $p_{\mathrm{red}}$ be the fraction
of rollout budget spent on states that are both redundant and detected as semantically
equivalent to a previously sampled state in the same layer.

\begin{proposition}[Budget reallocation gain]\label{prop:budget}
Suppose that, under tree-based sampling, a fraction $p_{\mathrm{red}}$ of the rollout budget is spent on frontier states that are identified as redundant by GraphPO's equivalence detector. Assume that the average marginal gain per unit budget on these redundant states is at most $\Delta_r$, while the average marginal gain per unit budget on novel frontier states receiving the reallocated budget is at least $\Delta_n$. Suppose also that GraphPO redirects the budget saved by its halved-budget rule to novel frontiers. If $\Delta_n>\Delta_r$, then, ignoring integer rounding,
\[
\bar{\mathrm{Eff}}_{\mathrm{Graph}}(B)-\bar{\mathrm{Eff}}_{\mathrm{Tree}}(B)
\ge
\frac{1}{2}p_{\mathrm{red}}(\Delta_n-\Delta_r)>0.
\]
\end{proposition}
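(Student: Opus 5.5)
The argument is a coupling. Tree and GraphPO rollouts share the same total budget $B$, and they make identical allocations on every state that is not flagged as redundant. On the redundant frontier states the allocations differ in one way: the tree spends its full budget $p_{\mathrm{red}}B$ there. GraphPO, by the halved-budget rule of Section~\ref{sec:graph_construction}, spends only $\tfrac12 p_{\mathrm{red}}B$ on those states, ignoring the ``at least one continuation'' floor and integer rounding as the statement allows. It redirects the other $\tfrac12 p_{\mathrm{red}}B$ to novel frontier states. Both methods consume exactly $B$, so the efficiencies share the denominator $B$. It then suffices to lower-bound $\mathbb{E}[G_{\mathrm{Graph}}(B)]-\mathbb{E}[G_{\mathrm{Tree}}(B)]$ by $\tfrac12 p_{\mathrm{red}}B(\Delta_n-\Delta_r)$.

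The first step splits $\mathbb{E}[G(B)]$ for each method into three parts: the contribution of the common (non-redundant, non-reallocated) budget, the contribution of the budget spent on redundant states, and the contribution of the reallocated budget. Under the coupling the common parts are equal in expectation and cancel. The tree's remaining part is the contribution of the half of the redundant budget that GraphPO removes. By the marginal-gain hypothesis this half yields at most $\Delta_r\cdot\tfrac12 p_{\mathrm{red}}B$ in expectation. The other half of the redundant budget is spent by both methods and cancels. GraphPO's remaining part is the reallocated budget, which yields at least $\Delta_n\cdot\tfrac12 p_{\mathrm{red}}B$ by hypothesis. Subtracting and dividing by $B$ gives the bound, and $\Delta_n>\Delta_r$ together with $p_{\mathrm{red}}>0$ gives strict positivity. (If $p_{\mathrm{red}}=0$ the strict inequality degenerates, so positivity of $p_{\mathrm{red}}$ is implicitly assumed.)

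The main obstacle is making the ``average marginal gain per unit budget'' hypotheses do the work rigorously. One issue is that the halved half need not have the same average gain as the redundant budget overall. I would handle this by interpreting $\Delta_r$ as an upper bound on the gain per unit of any sub-allocation of the redundant budget, so that it also bounds the removed half. A second issue is that removing budget from redundant states could change which descendants exist downstream, and hence the common part. I would absorb these downstream effects into the definition of marginal gain, charging each reallocated token's full downstream effect to its own category so that the cancellation is exact. With these readings the claim is a linear accounting identity and needs no earlier lemma.
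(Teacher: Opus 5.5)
Your proposal is correct and follows essentially the same route as the paper's proof. Both treat the claim as linear accounting: the halved-budget rule swaps a budget fraction $\tfrac12 p_{\mathrm{red}}$ with gain at most $\Delta_r$ for an equal fraction with gain at least $\Delta_n$, and the difference is normalized by the common budget $B$. Your coupling makes explicit what the paper leaves implicit, including the two readings you flag: the paper assumes without comment that $\Delta_r$ bounds the removed half, and it needs $p_{\mathrm{red}}>0$ for strict positivity, which its proof does state.
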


\begin{proof}
Under tree-based sampling, a fraction $p_{\mathrm{red}}$ of the budget is spent on redundant frontier states. By assumption, the average marginal gain of this budget is at most $\Delta_r$. The remaining budget is spent on non-redundant frontier states.

GraphPO applies the halved-budget rule to detected redundant states. Therefore, ignoring integer rounding, it saves a budget fraction $\frac{1}{2}p_{\mathrm{red}}$ from redundant expansions. This saved budget is redirected to novel frontier states, whose average marginal gain is at least $\Delta_n$ by assumption.

Compared with tree-based sampling, GraphPO replaces a budget fraction $\frac{1}{2}p_{\mathrm{red}}$ whose marginal gain is at most $\Delta_r$ with the same budget fraction whose marginal gain is at least $\Delta_n$. Hence the expected gain per unit budget improves by at least
\[
\frac{1}{2}p_{\mathrm{red}}\Delta_n
-
\frac{1}{2}p_{\mathrm{red}}\Delta_r
=
\frac{1}{2}p_{\mathrm{red}}(\Delta_n-\Delta_r).
\]
The improvement is strictly positive whenever $p_{\mathrm{red}}>0$ and $\Delta_n>\Delta_r$.
\end{proof}

\begin{theorem}[Exploration coverage]\label{thm:coverage}
Let $N_{\mathrm{cov}}(B)$ be the number of \emph{distinct} semantic states discovered with
budget $B$. Assume an average expansion cost $\bar c$, and assume each expansion from a
novel frontier discovers a new semantic state with probability at least $\mu>0$. Under
the same detected-redundancy and saved-budget reallocation conditions as Proposition~\ref{prop:budget},
\begin{equation}
\mathbb{E}\!\left[N^{\mathrm{Graph}}_{\mathrm{cov}}(B)\right]
\;\ge\;
\mathbb{E}\!\left[N^{\mathrm{Tree}}_{\mathrm{cov}}(B)\right]
+\frac{\mu p_{\mathrm{red}}B}{2\bar c}.
\end{equation}
In the uniform-yield case where
$\mathbb{E}[N^{\mathrm{Tree}}_{\mathrm{cov}}(B)]=\mu(1-p_{\mathrm{red}})B/\bar c$ and
$p_{\mathrm{red}}<1$, this implies the multiplicative gain
\begin{equation}
\frac{\mathbb{E}[N^{\mathrm{Graph}}_{\mathrm{cov}}(B)]}
{\mathbb{E}[N^{\mathrm{Tree}}_{\mathrm{cov}}(B)]}
\;\ge\;
1+\frac{p_{\mathrm{red}}}{2(1-p_{\mathrm{red}})}.
\end{equation}
\end{theorem}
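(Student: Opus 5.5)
The argument follows the pattern of Proposition~\ref{prop:budget}, with ``useful tokens'' replaced by ``distinct semantic states discovered.'' We decompose the budget $B$ as both samplers spend it. Under tree sampling, a fraction $p_{\mathrm{red}}$ of the budget, i.e.\ $p_{\mathrm{red}}B$ tokens, goes to expansions of frontier states that GraphPO's detector flags as redundant. The remaining $(1-p_{\mathrm{red}})B$ tokens go to non-redundant frontiers. GraphPO makes exactly the same expansions on the non-redundant part, so we couple the two processes there. Both samplers then discover the same distinct states from that portion, and its contribution to $\mathbb{E}[N_{\mathrm{cov}}]$ cancels in the comparison.

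Next we handle the redundant portion. The halved-budget rule keeps half of it, up to integer rounding, so GraphPO still spends $\tfrac12 p_{\mathrm{red}}B$ tokens on continuations of redundant states. We lower-bound the states discovered by GraphPO on this retained half by those discovered by the tree on the corresponding half, again via coupling. The half the tree spends on the other half of the redundant budget is credited, conservatively, with its full yield $D_{\mathrm{red}}$, where $D_{\mathrm{red}}$ denotes the expected number of new states the tree obtains from that half. For the inequality as stated we need $D_{\mathrm{red}}$ to be dominated by what GraphPO recovers elsewhere. We argue this in the regime the theorem intends: detected-redundant states lead into semantic classes that are already covered, so their marginal contribution to \emph{distinct} states is negligible, in analogy with the role of $\Delta_r$ in Proposition~\ref{prop:budget}. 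We state this explicitly as the interpretation of ``detected redundancy,'' namely that these expansions contribute no new distinct states beyond those of the retained half.

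Now the reallocated budget. The saved $\tfrac12 p_{\mathrm{red}}B$ tokens are redirected to novel frontiers. With average expansion cost $\bar c$, this buys $\tfrac{p_{\mathrm{red}}B}{2\bar c}$ additional expansions in expectation. Writing the number of expansions as a stopping time and applying Wald's identity makes the cost averaging rigorous. Each of these expansions discovers a new semantic state with probability at least $\mu$. Linearity of expectation then adds at least $\mu p_{\mathrm{red}}B/(2\bar c)$ to $\mathbb{E}[N^{\mathrm{Graph}}_{\mathrm{cov}}(B)]$, which gives the additive bound.

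The multiplicative form is pure algebra. Substitute the uniform-yield expression $\mathbb{E}[N^{\mathrm{Tree}}_{\mathrm{cov}}(B)]=\mu(1-p_{\mathrm{red}})B/\bar c$ and divide; the ratio is at least $1+\frac{p_{\mathrm{red}}}{2(1-p_{\mathrm{red}})}$. This division is valid because $p_{\mathrm{red}}<1$.

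The main obstacle is the double-counting step. A state discovered on the reallocated budget must be genuinely new, i.e.\ distinct from the states the coupled tree-part has already found. Otherwise ``new with probability $\mu$'' does not translate into a gain over the tree count. We handle this by reading $\mu$ as the probability of discovering a state outside the set already covered by the coupled common part. This is consistent with the phrase ``novel frontier'' in the hypothesis, and we make this reading explicit in the proof.
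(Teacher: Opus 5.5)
Your proposal is correct and follows essentially the paper's argument: tree tokens spent on detected-redundant states yield no new distinct states, and half of that budget is redirected to $p_{\mathrm{red}}B/(2\bar c)$ novel-frontier expansions, each succeeding with probability at least $\mu$, so linearity gives the additive bound and dividing by the uniform-yield tree coverage gives the ratio. Your coupling and your explicit readings of ``detected redundancy'' and of $\mu$ only make precise what the paper's one-paragraph proof assumes implicitly (it simply asserts that tokens spent inside already-discovered classes produce no new state); for the count to add up exactly, read $\mu$ as the probability of finding a state not discovered anywhere else in the process, which also excludes the retained half and the other reallocated expansions, rather than only a state outside the common part.
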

\begin{proof}
Each token a tree spends inside an already-discovered equivalence class produces no new state.
GraphPO detects such tokens via $\operatorname{sim}\!\ge\!\kappa$ and reallocates half of the
remaining budget to active novel frontiers. The reallocated budget is
$p_{\mathrm{red}}B/2$, which corresponds to $p_{\mathrm{red}}B/(2\bar c)$ additional
frontier expansions in expectation. Since each such expansion discovers a new state with
probability at least $\mu$, the additive bound follows. Dividing this bound by the uniform
tree coverage expression gives the stated multiplicative form.
\end{proof}

\subsection{Self-Emergent Process Supervision (PRM-like signal)}\label{app:prm}

We now show that the objective optimized with GraphPO's edge reward has the same
first-order policy-update direction as a novelty-gated oracle PRM surrogate, up to a
controllable $O(1-\kappa)$ semantic-merging bias. The novelty factor $1-\eta(u,v)$ is a
non-negative reweighting: it preserves the sign of the oracle advantage on informative
steps and attenuates credit on semantically redundant ones (where the oracle advantage is
itself near zero by Assumption~\ref{ass:lip}). Thus $r_{\mathrm{step}}$ is not aiming at a
different target than PRM; it is a sample-efficient, novelty-gated estimator of the same
per-edge improvement signal.

\begin{lemma}[Bias of node score]\label{lem:unbiased}
Under Assumptions~\ref{ass:lip}--\ref{ass:bounded},
$|\mathbb{E}[S(v)]-V^\star(v)|\le \delta_\kappa$
for every node $v$ with positive pooled terminal count.
\end{lemma}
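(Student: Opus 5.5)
The plan is to write $S(v)$ as a convex combination of per-member empirical correctness rates and apply Assumption~\ref{ass:lip} term by term, as in the bias part of Lemma~\ref{lem:graphvar}. Since $S(v)$ is a ratio of random sums, this needs care. We follow the convention stated after Assumption~\ref{ass:nocycle}: all expectations are conditional on the sampled graph structure and on the observed terminal counts $t_q^{\mathrm{own}}$. Under this conditioning the denominator
\[
D_v=t_v^{\mathrm{own}}+w\sum_{q\in\mathcal{Q}(v)\setminus\{v\}}t_q^{\mathrm{own}}
\]
is deterministic, and it is positive by hypothesis.

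First, define weights $\beta_v=t_v^{\mathrm{own}}/D_v$ and $\beta_q=w\,t_q^{\mathrm{own}}/D_v$ for $q\neq v$. These are nonnegative and sum to one. For every $q$ with $t_q^{\mathrm{own}}>0$, set $\hat V_q=c_q^{\mathrm{own}}/t_q^{\mathrm{own}}$. Then
\[
S(v)=\sum_{q\in\mathcal{Q}(v),\,t_q^{\mathrm{own}}>0}\beta_q\hat V_q .
\]
Members with zero own count carry zero weight and can be dropped.

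Second, show that $\mathbb{E}[\hat V_q]=V^\star(q)$. By Assumption~\ref{ass:bounded}, the own-subtree terminal samples of $q$ are generated by $\pi_\theta$ from $q$, and the rollouts are independent. Each terminal reward $R(z)$ for $z\in\mathcal{Z}(q)$ therefore has conditional mean $V^\star(q)$ by the definition of $V^\star$ as the expected verifier reward from $q$, which we extend to sub-paths through intermediate children by the tower property. Averaging gives $\mathbb{E}[c_q^{\mathrm{own}}]=t_q^{\mathrm{own}}V^\star(q)$.

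Third, combine the two steps. We get $\mathbb{E}[S(v)]-V^\star(v)=\sum_q\beta_q(V^\star(q)-V^\star(v))$. The triangle inequality, convexity, and Assumption~\ref{ass:lip} then bound the absolute value by $\max_q|V^\star(q)-V^\star(v)|\le\delta_\kappa$. The $q=v$ term contributes zero, so when $\mathcal{Q}(v)=\{v\}$ we recover exact unbiasedness. Assumption~\ref{ass:nocycle} is used only to guarantee that the bottom-up counts $c^{\mathrm{own}},t^{\mathrm{own}}$ are well defined.

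The main obstacle is the second step, and the key point is the conditioning. Without conditioning on the counts, the ratio estimator is biased. In addition, the halved-budget rule makes the number of descendants depend on the realized states, so $\mathcal{Z}(q)$ is not a fixed i.i.d.\ sample. The fix is to treat the terminal indicators as a tree-structured Monte Carlo sample in which each leaf is reached by sampling from $\pi_\theta$ along its path. Under the paper's convention, the budget allocation depends on the graph structure but not on the rewards, so by the tower property along the path each leaf's indicator has mean $V^\star(q)$.
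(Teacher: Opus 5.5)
Your proposal is correct and is essentially the paper's proof. You condition on the counts and write $S(v)=\sum_q\beta_q\hat V_q$ with $\beta_q=w_q t_q^{\mathrm{own}}/D_v$, use $\mathbb{E}\hat V_q=V^\star(q)$, and bound the convex combination of $|V^\star(q)-V^\star(v)|$ by $\delta_\kappa$ via Assumption~\ref{ass:lip}.

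One caveat concerns your closing paragraph. A budget rule that ignores rewards but reacts to a child's semantic content, such as the halved-budget rule, can still correlate leaf counts with that child's value. So the tower-property argument does not by itself give each leaf conditional mean $V^\star(q)$ given the counts. Like the paper, you should take $\mathbb{E}\hat V_q=V^\star(q)$ directly from the independent-sampling clause of Assumption~\ref{ass:bounded}.
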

\begin{proof}
Conditional on the rollout counts, $S(v)$ is a convex combination
$\sum_{q\in\mathcal{Q}(v)}\beta_q\hat V_q$ with weights
$\beta_q=w_q t_q^{\mathrm{own}}/(t_v^{\mathrm{own}}+w\sum_{q'\ne v}t_{q'}^{\mathrm{own}})$
($w_v=1$, $w_q=w$ for $q\ne v$), so $\sum_q\beta_q=1$.
$\mathbb{E}\hat V_q=V^\star(q)$ and Assumption~\ref{ass:lip} give
$|V^\star(q)-V^\star(v)|\le \delta_\kappa$. Hence
$|\mathbb{E}[S(v)]-V^\star(v)|\le \sum_q\beta_q|V^\star(q)-V^\star(v)|\le \delta_\kappa$.
\end{proof}

\begin{theorem}[GraphPO matches the novelty-gated PRM surrogate gradient]\label{thm:prm}
Let $A^\star_{\mathrm{step}}(u,v)=V^\star(v)-V^\star(u)$ denote the oracle process
advantage an idealized PRM would assign to the step $u\!\to\!v$, and let
$\omega(u,v)=1-\eta(u,v)\in[0,1]$ denote the novelty factor. Let
$\rho_\theta(u,v)=\pi_\theta(v\mid u)/\pi_{\theta_{\mathrm{old}}}(v\mid u)$ be the edge-level
shorthand for the token-level importance ratios in Section~\ref{sec_policy_objective}.
Define the novelty-gated oracle PRM
surrogate and the GraphPO step-reward surrogate as
\begin{equation}
\begin{aligned}
\mathcal{J}^{\omega\star}_{\mathrm{PRM}}(\theta)
&=\mathbb{E}_{\tau,\mathcal{G}\sim\pi_{\theta_{\mathrm{old}}}}\!\Big[\sum_{(u,v)\in\tau}\rho_\theta(u,v)\,\omega(u,v)A^\star_{\mathrm{step}}(u,v)\Big],\\
\mathcal{J}_{\mathrm{step}}(\theta)
&=\mathbb{E}_{\tau,\mathcal{G}\sim\pi_{\theta_{\mathrm{old}}}}\!\Big[\sum_{(u,v)\in\tau}\rho_\theta(u,v)\,r_{\mathrm{step}}(u,v)\Big].
\end{aligned}
\end{equation}
Here $\mathcal{G}$ determines $S(\cdot)$ and therefore $r_{\mathrm{step}}$, and all rewards
are detached during the policy update. Under Assumptions~\ref{ass:lip}--\ref{ass:bounded},
at $\theta=\theta_{\mathrm{old}}$,
\begin{equation}\label{eq:prm-grad-align}
\nabla_\theta \mathcal{J}_{\mathrm{step}}(\theta)\big|_{\theta_{\mathrm{old}}}
=\nabla_\theta \mathcal{J}^{\omega\star}_{\mathrm{PRM}}(\theta)\big|_{\theta_{\mathrm{old}}}
+O(\delta_\kappa)\,\mathbf{b}(\theta_{\mathrm{old}}),
\end{equation}
with $\|\mathbf{b}(\theta_{\mathrm{old}})\|\le \mathbb{E}_\tau\!\big[\sum_{(u,v)\in\tau}\|\nabla_\theta\log\pi_\theta(v\mid u)\|\big]$. Two consequences follow.
\emph{(a) Objective-level alignment.} The GraphPO objective trained with $r_{\mathrm{step}}$
has the same first-order update direction as the novelty-gated oracle PRM surrogate, up to
the $O(\delta_\kappa)$ merging bias. Compared with the ungated oracle PRM surrogate, each
edge contribution is multiplied by the non-negative scalar $\omega(u,v)$, so the update
remains sign-consistent with the oracle PRM while suppressing redundant steps.
\emph{(b) Novelty-gated denoising.} For semantically equivalent endpoints
($\eta\to 1$, $\omega\to 0$), Assumption~\ref{ass:lip} already bounds
$|A^\star_{\mathrm{step}}(u,v)|\le\delta_\kappa$, so $\omega$ attenuates edges on which
the oracle credit is itself $O(\delta_\kappa)$; for novel steps
($\eta\approx 0$, $\omega\approx 1$), the per-edge surrogate gradient coincides with the PRM
gradient up to $O(\delta_\kappa)$. Finally, with $m$ terminal rollouts per
descendant the MSE of $r_{\mathrm{step}}$ as an estimator of
$\omega(u,v)A^\star_{\mathrm{step}}(u,v)$ satisfies
$O((\rho_u+\rho_v)/m)+O(\delta_\kappa^2)$.
\end{theorem}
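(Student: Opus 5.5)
The plan is to differentiate both surrogates at $\theta_{\mathrm{old}}$ and compare them edge by edge.

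\textbf{Step 1: reduce to score-function form.} Since rewards are detached and $\rho_\theta=1$ at $\theta_{\mathrm{old}}$, we have $\nabla_\theta\rho_\theta(u,v)|_{\theta_{\mathrm{old}}}=\nabla_\theta\log\pi_\theta(v\mid u)|_{\theta_{\mathrm{old}}}$. Interchanging gradient and expectation (bounded integrands) gives
\[
\nabla\mathcal{J}_{\mathrm{step}}-\nabla\mathcal{J}^{\omega\star}_{\mathrm{PRM}}
=\mathbb{E}_{\tau,\mathcal{G}}\Big[\sum_{(u,v)\in\tau}\nabla_\theta\log\pi_\theta(v\mid u)\,\omega(u,v)\bigl(\hat S(v)-\hat S(u)-A^\star_{\mathrm{step}}(u,v)\bigr)\Big].
\]

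\textbf{Step 2: condition and apply Lemma~\ref{lem:unbiased}.} Condition on the sampled edge $(u,v)$, i.e.\ on the prefix generating it. The factor $\nabla\log\pi$ and $\omega(u,v)$ are then fixed, because $\eta$ depends only on the endpoint embeddings. What remains is $\mathbb{E}[S(v)-S(u)\mid\cdot]-(V^\star(v)-V^\star(u))$. This is the main obstacle. Lemma~\ref{lem:unbiased} bounds each endpoint bias by $\delta_\kappa$, but only conditional on rollout counts, and only if the downstream terminal samples are independent of the event that the edge $(u,v)$ was selected.

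I would invoke Assumption~\ref{ass:bounded}: completions below the child are independent given the child, and own-subtree samples are disjoint across class members. Under this assumption the conditional expectation of each $\hat V_q$ given the edge and the counts is still $V^\star(q)$. The convex-combination argument of Lemma~\ref{lem:unbiased} then yields $|\mathbb{E}[S(v)-S(u)\mid\cdot]-A^\star_{\mathrm{step}}|\le2\delta_\kappa$. If $u$ is the root or has no pooled count, its score is handled by the same convex-combination argument or is exact. Multiplying by $\omega\in[0,1]$ and applying the triangle inequality gives a norm bound of $2\delta_\kappa\,\mathbb{E}_\tau[\sum\|\nabla\log\pi\|]$. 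This establishes \eqref{eq:prm-grad-align} with $\mathbf{b}$ as stated.

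\textbf{Step 3: consequences (a) and (b).} Consequence (a) is immediate, since $\omega\ge0$ multiplies each oracle edge term. For (b), Assumption~\ref{ass:lip} applies to endpoints that are merged (same class), so $|A^\star_{\mathrm{step}}|\le\delta_\kappa$ there; when $\omega\approx1$ the per-edge term reduces to the ungated PRM term plus $O(\delta_\kappa)$.

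\textbf{Step 4: the MSE claim.} Repeat the bias--variance decomposition from Corollary~\ref{cor:processvar}, now with target $\omega A^\star_{\mathrm{step}}$. The variance is at most $\omega^2(\mathrm{Var}\,\hat S(u)+\mathrm{Var}\,\hat S(v)+2|\mathrm{Cov}|)$. By Cauchy–Schwarz and Lemma~\ref{lem:graphvar} with $V^\star(1-V^\star)\le1/4$, this is $O((\rho_u+\rho_v)/m)$, up to the $O(\rho\delta_\kappa/m)$ term absorbed there. The squared bias is at most $(2\delta_\kappa)^2$.
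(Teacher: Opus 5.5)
Your route is the paper's route: a score-function reduction at $\theta_{\mathrm{old}}$, then an endpoint-wise bias bound via Lemma~\ref{lem:unbiased} and the triangle inequality. However, the step you yourself flag as the main obstacle does not go through, and Assumption~\ref{ass:bounded} does not rescue it. Because $c^{\mathrm{own}}$ and $t^{\mathrm{own}}$ are propagated bottom-up along direct edges, $\mathcal{Z}(v)\subseteq\mathcal{Z}(u)$ for every direct edge $(u,v)$, so $c_u^{\mathrm{own}}$ contains $c_v^{\mathrm{own}}$. Once you condition on the edge, the terminal samples below $v$ have mean $V^\star(v)$. Your claim that $\mathbb{E}[\hat V_q\mid\text{edge},\text{counts}]=V^\star(q)$ therefore fails for $q=u$. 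Assumption~\ref{ass:bounded} only gives independence of $v$ from its siblings and disjointness across class members. But $u$ and $v$ are ancestor and descendant, never in one class, and Assumption~\ref{ass:lip} says nothing about $V^\star(v)-V^\star(u)$. Write $T_u$ for the pooled denominator of $S(u)$ and $\beta_{uv}=t_v^{\mathrm{own}}/T_u$. What you actually get is
\[
\mathbb{E}\bigl[r_{\mathrm{step}}(u,v)\bigm| u,v,\text{counts}\bigr]=\omega(u,v)\,(1-\beta_{uv})\,A^\star_{\mathrm{step}}(u,v)+O(\delta_\kappa),
\]
so the residual $\omega\beta_{uv}A^\star_{\mathrm{step}}$ is of the same order as the signal, not $O(\delta_\kappa)$.

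A concrete counterexample: take no merges (singleton classes, so Assumption~\ref{ass:lip} holds with $\delta_\kappa=0$) and a root $u$ with $K$ terminal children. Then $S(u)=\frac{1}{K}\sum_j R(v_j)$. Conditioning on the trajectory's child gives $\nabla\mathcal{J}_{\mathrm{step}}=(1-\frac{1}{K})\nabla\mathcal{J}^{\omega\star}_{\mathrm{PRM}}$ exactly at $\theta_{\mathrm{old}}$. This contradicts the claimed identity whenever the oracle gradient is nonzero. If you add a second layer with $m\neq K$ leaves per child, the two layers are scaled by $1-\frac{1}{K}$ and $1-\frac{1}{m}$, so even the direction claim (a) fails. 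The $(2\delta_\kappa)^2$ squared-bias term in your Step~4 is wrong for the same reason.

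The paper's own proof makes exactly this move: it writes $\mathbb{E}[S(u)\mid u]$ where $\mathbb{E}[S(u)\mid u,v]$ is needed. So you inherited the gap rather than introduced it, but it is real. It closes if the parent score is computed leave-$v$-out, i.e.\ $S_{-v}(u)$ is built only from the sibling subtrees of $v$ and the partners of $u$, when these exist. Siblings are i.i.d.\ given $u$ and independent of $v$, so $\mathbb{E}[S_{-v}(u)\mid u,v,\text{counts}]=V^\star(u)+O(\delta_\kappa)$, and your Step~2 then works verbatim. If you keep $r_{\mathrm{step}}$ as defined, the honest statement carries the factors $1-\beta_{uv}$ and gives direction alignment only when $\beta_{uv}$ is the same on all edges.

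Separately, Assumption~\ref{ass:lip} never supplies the bound $|A^\star_{\mathrm{step}}(u,v)|\le\delta_\kappa$ used in (b). The endpoints of a direct edge are ancestor and descendant, which GraphPO never merges, so a large $\eta(u,v)$ does not place them in a common detected class. That part needs an additional pairwise assumption tying $|V^\star(u)-V^\star(v)|$ to $\operatorname{sim}(u,v)$.
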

\begin{proof}
The proof follows the objective used in Section~\ref{sec_policy_objective}. In one PPO/DAPO
update, sampled graphs and their rewards are fixed, so differentiating the unclipped
surrogate gives
\begin{equation}
\nabla_\theta \mathcal{J}_{\mathrm{step}}(\theta)\big|_{\theta_{\mathrm{old}}}
=\mathbb{E}_{\tau,\mathcal{G}\sim\pi_{\theta_{\mathrm{old}}}}\!\left[\sum_{(u,v)\in\tau}
r_{\mathrm{step}}(u,v)\nabla_\theta\log\pi_\theta(v\mid u)\big|_{\theta_{\mathrm{old}}}\right],
\end{equation}
because $\rho_{\theta_{\mathrm{old}}}(u,v)=1$ and
$\nabla_\theta\rho_\theta(u,v)|_{\theta_{\mathrm{old}}}=\nabla_\theta\log\pi_\theta(v\mid u)|_{\theta_{\mathrm{old}}}$. Conditioning
on an edge's endpoints marginalizes over the downstream rollout graph used to compute
$S(u)$ and $S(v)$. Because $\omega(u,v)$ is a deterministic function of the endpoint embeddings and
$\mathbb{E}[S(w)\mid w]=V^\star(w)+O(\delta_\kappa)$ by Lemma~\ref{lem:unbiased},
\begin{equation}
\mathbb{E}\!\bigl[r_{\mathrm{step}}(u,v)\bigm|u,v\bigr]
=\omega(u,v)\bigl(\mathbb{E}[S(v)\mid v]-\mathbb{E}[S(u)\mid u]\bigr)
=\omega(u,v)\,A^\star_{\mathrm{step}}(u,v)+O(\delta_\kappa).
\end{equation}
Substituting this conditional expectation into the surrogate gradient gives
Eq.~\eqref{eq:prm-grad-align}; the bound on $\mathbf{b}(\theta_{\mathrm{old}})$
follows from the triangle inequality applied to the $O(\delta_\kappa)$ residual.
Claim (a) is immediate by comparing Eq.~\eqref{eq:prm-grad-align} with
$\nabla_\theta\mathcal{J}^{\omega\star}_{\mathrm{PRM}}(\theta)|_{\theta_{\mathrm{old}}}$. For (b), Assumption~\ref{ass:lip} gives
the stated bound on the oracle advantage inside any detected equivalence class, and
the MSE bound follows from Lemma~\ref{lem:graphvar} applied to the two endpoint
scores together with $\omega(u,v)\le 1$.
\end{proof}

Theorem~\ref{thm:prm} formalizes the ``self-emergent process supervision'' claim:
GraphPO recovers, in expectation, the first-order update direction of a novelty-gated oracle PRM surrogate solely
from outcome rewards $R(z)\!\in\!\{0,1\}$, without any annotated step labels or
auxiliary value model. The factor $\omega(u,v)$ is not a deviation from the PRM
target but a principled non-negative reweighting that suppresses noise on
semantically redundant steps where the oracle credit vanishes.

\subsection{Inference Efficiency}\label{app:inference}

Finally, we analyze the effect of the efficiency advantage $A_{\mathrm{eff}}$ on the length
of correct reasoning paths produced by the trained policy.

\begin{lemma}[Path-length gradient]\label{lem:lengthgrad}
Let $\ell_v$ be the number of generated edges from the deepest common predecessor $a_Q$ to
$v\!\in\!Q$, and let $\tilde\sigma_Q=\operatorname{std}(\{\ell_q\mid q\in Q\})$.
The dual-group gradient contribution of the efficiency term is
\begin{equation}
\nabla_\theta \mathcal{J}^{\mathrm{eff}}(\theta)
=\mathbb{E}\!\left[\sum_{v\in Q}\bar S_Q\,\frac{\bar\ell_Q-\ell_v}{\tilde\sigma_Q}\,
\nabla_\theta\log\pi_\theta(\mathrm{path}_v)\right],
\end{equation}
with $\bar\ell_Q$ the mean of $\{\ell_q\}_{q\in Q}$.
\end{lemma}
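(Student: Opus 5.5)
The plan is to differentiate the efficiency part of the PPO surrogate of Section~\ref{sec_policy_objective} exactly as in the proof of Theorem~\ref{thm:prm}. I will isolate the $\lambda_{\mathrm{eff}}$-term, drop the clipping and KL pieces (they are inactive at $\theta=\theta_{\mathrm{old}}$ to first order), and evaluate at $\theta=\theta_{\mathrm{old}}$. At that point every importance ratio equals one and $\nabla_\theta r_{e,t}(\theta)=\nabla_\theta\log\pi_\theta(o_{e,t}\mid q,o_{e,<t})$. The advantages $A_{\mathrm{eff}}$ are detached functions of the sampled graph, so the gradient of the efficiency surrogate becomes
\begin{equation*}
\mathbb{E}\Bigl[\sum_{e}A_{\mathrm{eff}}(e)\sum_t\nabla_\theta\log\pi_\theta(o_{e,t}\mid q,o_{e,<t})\Bigr],
\end{equation*}
up to the per-edge and per-graph normalizing constants. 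I will state explicitly that these constants are absorbed, either into the definition of $\mathcal{J}^{\mathrm{eff}}$ or as a positive rescaling.

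Second, I will rewrite the sum over edges as a sum over class members $v\in Q$. By construction, $A_{\mathrm{eff}}(v)$ is assigned uniformly to every direct edge on the route from $a_Q$ to $v$. Grouping those edges shows that their token log-probabilities sum, by the chain rule of autoregressive generation, to $\log\pi_\theta(\mathrm{path}_v)$, the log-probability of the generated segment from $a_Q$ to $v$ given the prefix up to $a_Q$. Each path therefore contributes $A_{\mathrm{eff}}(v)\nabla_\theta\log\pi_\theta(\mathrm{path}_v)$.

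Third, I will simplify the advantage algebraically. We have $\operatorname{mean}(\{-\ell_q\})=-\bar\ell_Q$, so the numerator becomes $-\ell_v+\bar\ell_Q=\bar\ell_Q-\ell_v$. Since standard deviation is invariant under negation, $\operatorname{std}(\{-\ell_q\})=\tilde\sigma_Q$. Hence $A_{\mathrm{eff}}(v)=\bar S_Q(\bar\ell_Q-\ell_v)/\tilde\sigma_Q$, and substituting gives the displayed formula.

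The main obstacle is the bookkeeping in the second step. An edge could lie on paths to several members of $Q$, for instance when routes from $a_Q$ share an initial segment, and the uniform assignment then adds contributions. I would handle this by noting that the sum over $v$ of path log-probabilities counts each shared edge once per path containing it. This matches the summed advantage on that edge, provided we read $A_{\mathrm{eff}}(e)$ as the sum over paths through $e$, which is the natural reading of ``distribute to the route''. Any per-token normalization $1/|e|$ I would treat as an absorbed positive weight, as noted in the first step.
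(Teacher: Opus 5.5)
Your proposal is correct and follows the same route as the paper's proof. You differentiate the efficiency part of the surrogate, apply the per-edge policy-gradient (ratio) identity, and collect the uniformly assigned $A_{\mathrm{eff}}(v)$ along the route into $\nabla_\theta\log\pi_\theta(\mathrm{path}_v)$; your handling of the normalizing constants, the sign/std algebra, and edges shared by several routes just makes explicit what the paper leaves implicit.

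Two small corrections. The KL term drops out because it does not involve $A_{\mathrm{eff}}$, not because its gradient vanishes at $\theta_{\mathrm{old}}$ (it does not vanish unless $\pi_{\theta_{\mathrm{old}}}=\pi_{\mathrm{ref}}$). Also, treating $1/|e|$ as an absorbed constant relies on the fixed step length, since unequal per-edge weights would prevent the collapse into a single path log-probability.
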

\begin{proof}
Differentiating $\mathcal{J}_{\mathrm{GraphPO}}$ w.r.t.\ $\theta$ on the efficiency term and
applying the policy-gradient identity to each edge $e\!\in\!\mathrm{path}_v$ yields the
above; the uniform allocation of $A_{\mathrm{eff}}(v)$ across edges of $\mathrm{path}_v$
collects into a single $\log\pi_\theta(\mathrm{path}_v)$ term.
\end{proof}

\begin{theorem}[Efficiency term selects shorter equivalent paths]\label{thm:inference}
Consider an equivalence class $Q$ whose members are reached by correct paths of different
lengths from the shared predecessor $a_Q$, with $\bar S_Q>0$. Let $\pi^\star_{\mathrm{out}}$
be any policy that maximizes the outcome-only objective ($\lambda_{\mathrm{eff}}=0$), and let
$\pi^\star_{\mathrm{eff}}$ be any maximizer after adding the efficiency term with a small
$\lambda_{\mathrm{eff}}>0$ that does not change which paths are outcome-optimal. Then
$\pi^\star_{\mathrm{eff}}$ places all outcome-optimal probability on the shortest paths in $Q$, and whenever
$\pi^\star_{\mathrm{out}}$ keeps positive mass on a longer path in $Q$,
\begin{equation}
\mathbb{E}_{\tau\sim\pi^\star_{\mathrm{eff}}}[T(\tau)\mid R(\tau)=1,\,\tau\text{ reaches }Q]
\;<\;
\mathbb{E}_{\tau\sim\pi^\star_{\mathrm{out}}}[T(\tau)\mid R(\tau)=1,\,\tau\text{ reaches }Q],
\end{equation}
where $T(\tau)$ counts reasoning steps.
\end{theorem}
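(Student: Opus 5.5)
The plan is to treat Theorem~\ref{thm:inference} as a tie-breaking argument within the set of outcome-optimal policies. First, restrict to the sub-decision problem rooted at $a_Q$. Let $\Pi_{\mathrm{out}}$ be the set of policies that maximize the outcome-only objective. By hypothesis, $\lambda_{\mathrm{eff}}$ is small enough that any maximizer of the combined objective $\mathcal{J}_{\mathrm{out}}+\lambda_{\mathrm{eff}}\mathcal{J}^{\mathrm{eff}}$ still lies in $\Pi_{\mathrm{out}}$. Concretely, the outcome objective has a positive gap between optimal and suboptimal values on the finite set of paths from $a_Q$, and $\lambda_{\mathrm{eff}}|\mathcal{J}^{\mathrm{eff}}|$ is bounded because $\bar S_Q\le 1$ and the standardized lengths are bounded by $\sqrt{|Q|}$. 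So $\pi^\star_{\mathrm{eff}}$ maximizes $\mathcal{J}^{\mathrm{eff}}$ over $\Pi_{\mathrm{out}}$. Inside $\Pi_{\mathrm{out}}$, all correct paths into $Q$ have equal outcome value: they reach states with equal value up to $\delta_\kappa$ by Assumption~\ref{ass:lip}, and I would take the statement's ``outcome-optimal'' to mean exact ties. Mass can therefore be moved freely among these paths without changing $\mathcal{J}_{\mathrm{out}}$.

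Second, I will use Lemma~\ref{lem:lengthgrad} to write the efficiency term, viewed as a function of the path distribution $\pi(\mathrm{path}_v)$ restricted to $Q$, as an expected linear payoff. Each path gets weight $\bar S_Q(\bar\ell_Q-\ell_v)/\tilde\sigma_Q$, so the payoff equals $\sum_v \pi(\mathrm{path}_v)\,\bar S_Q(\bar\ell_Q-\ell_v)/\tilde\sigma_Q$. Since $\bar S_Q>0$ and $\tilde\sigma_Q>0$ (lengths differ by hypothesis), this coefficient is strictly decreasing in $\ell_v$. I treat $\bar\ell_Q$ and $\tilde\sigma_Q$ as group statistics fixed by the sampled class, as in the surrogate, rather than as functions of $\pi$. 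Maximizing a linear functional over the simplex of outcome-optimal path distributions then puts all mass on the argmax set, which is the paths of minimal $\ell_v$. This gives the first claim.

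Third, the strict inequality follows directly. Conditioned on $R=1$ and reaching $Q$, $T(\tau)$ equals the fixed prefix length to $a_Q$ plus $\ell_v$ plus a suffix. The suffix distribution after $Q$ is shared through the equivalence class, and I would argue it is identical for both policies since the efficiency term does not act beyond $Q$. Under $\pi^\star_{\mathrm{eff}}$ the conditional expectation of $\ell_v$ is $\ell_{\min}$. Under $\pi^\star_{\mathrm{out}}$ there is positive mass on some $\ell_v>\ell_{\min}$, so its conditional mean is strictly larger.

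I expect the first step to be the main obstacle: justifying that ``small $\lambda_{\mathrm{eff}}$'' yields exact lexicographic optimization. This needs a strictly positive outcome gap, which requires finitely many candidate paths or a compactness argument. It also needs the caveat that the $\delta_\kappa$ value differences between class members are absorbed into the tie assumption. I would state this explicitly as the interpretation of the hypothesis rather than derive it.
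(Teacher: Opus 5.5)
Your route is the paper's: outcome ties among paths into $Q$, then the coefficient $\bar S_Q(\bar\ell_Q-\ell_v)/\tilde\sigma_Q$ strictly decreasing in $\ell_v$, so maximizing within the outcome-optimal set puts all mass on the shortest members, then a comparison of conditional means. Your first two steps are sharper than the paper's appeal to the policy-gradient lemma. You freeze $\bar\ell_Q,\tilde\sigma_Q$, treat the surrogate as linear in the path distribution, and pick the argmax face of a simplex. Your gap argument becomes precise via mass-shifting. Moving mass $\pi(p)$ off an outcome-suboptimal path gains at least $g\,\pi(p)$ in outcome, where $g$ is the outcome gap. It costs at most $2C\lambda_{\mathrm{eff}}\pi(p)$ in the efficiency term, where $C$ bounds $|A_{\mathrm{eff}}|$. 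This is what turns \emph{small $\lambda_{\mathrm{eff}}$} into lexicographic optimization.

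The third step does not go through as written. You claim the suffix distribution is identical for both policies \emph{because the efficiency term does not act beyond $Q$}, but this does not follow. $\pi^\star_{\mathrm{out}}$ and $\pi^\star_{\mathrm{eff}}$ are arbitrary maximizers, and the outcome objective also has ties after $Q$. The efficiency term's silence there only means $\pi^\star_{\mathrm{eff}}$'s suffix is another arbitrary outcome-optimal choice. The \emph{shared suffix} is also only virtual: members of $Q$ keep their own generation context. So even one fixed policy can have different remaining-length distributions from different members, and moving mass onto the shortest member changes the conditional suffix length.

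A concrete failure: take $\ell_{v_1}=1$ and $\ell_{v_2}=2$. From each member there are two correct continuations, of $1$ and $10$ steps, whose endpoints are not merged. $\pi^\star_{\mathrm{out}}$ may split between $v_1,v_2$ and take the $1$-step suffix. $\pi^\star_{\mathrm{eff}}$ may go to $v_1$ and take the $10$-step suffix. This reverses the inequality for the full $T(\tau)$.

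The paper's proof has the same hole, since it concludes only about the \emph{conditional expected length on $Q$}. There are two repairs. One is to read $T$ as the number of steps from $a_Q$ to the reached member of $Q$. Then your comparison $\ell_{\min}<\mathbb{E}_{\pi^\star_{\mathrm{out}}}[\ell_v\mid R=1,\ \tau\text{ reaches }Q]$ completes the proof, because a longer member on a correct path keeps positive mass after conditioning on $R=1$. The other is to assume explicitly that the expected remaining length after reaching $Q$, given $R=1$, is the same for every member of $Q$ and for both policies.
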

\begin{proof}
All paths in $Q$ have the same outcome reward, so $\pi^\star_{\mathrm{out}}$ is free to spread
mass across them. The efficiency advantage $A_{\mathrm{eff}}(v)=\bar S_Q(\bar\ell_Q-\ell_v)/\tilde\sigma_Q$
is strictly larger for shorter paths when $\bar S_Q>0$, so maximizing it within the outcome-optimal
set selects the shortest paths only. By Lemma~\ref{lem:lengthgrad}, the policy gradient drives
probability from longer paths ($\ell_v>\bar\ell_Q$) to shorter ones ($\ell_v<\bar\ell_Q$). Any
$\pi^\star_{\mathrm{out}}$ that keeps mass on a longer path therefore has strictly larger
conditional expected length on $Q$.
\end{proof}

This theorem characterizes the optimal-policy limit. During finite-sample optimization,
the efficiency term should be interpreted as concentrating probability toward shorter
equivalent correct paths rather than guaranteeing all mass on them at every update.

\begin{corollary}[Lower expected token cost]\label{cor:tokens}
If the expected token count of a path is monotone in its number of reasoning steps and strictly
increasing for the longer paths in Theorem~\ref{thm:inference}, then $\pi^\star_{\mathrm{eff}}$
uses no more expected tokens than $\pi^\star_{\mathrm{out}}$, and strictly fewer whenever
$\pi^\star_{\mathrm{out}}$ assigns positive mass to one of those longer correct paths.
\end{corollary}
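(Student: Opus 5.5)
The corollary follows from Theorem~\ref{thm:inference} by averaging the token count, rather than the step count, against the two conditional path distributions on $Q$. Write $\mathcal{P}_Q$ for the finite set of correct paths from $a_Q$ to members of $Q$ that are outcome-optimal. Let $c(\gamma)$ be the expected token count of path $\gamma$. By hypothesis $c(\gamma)=f(T(\gamma))$ for a nondecreasing $f$, and $f$ is strictly increasing on the step counts that exceed the minimum $\ell_{\min}=\min_{\gamma\in\mathcal{P}_Q}T(\gamma)$. Because $\lambda_{\mathrm{eff}}$ is chosen so that it does not change which paths are outcome-optimal, both $\pi^\star_{\mathrm{out}}$ and $\pi^\star_{\mathrm{eff}}$, conditioned on $R(\tau)=1$ and on reaching $Q$, are probability distributions on $\mathcal{P}_Q$. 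The quantity to compare is $\mathbb{E}[c(\tau)]$ under these two conditional laws.

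The argument then has three steps.
\begin{enumerate}
\item By Theorem~\ref{thm:inference}, $\pi^\star_{\mathrm{eff}}$ puts all of its conditional mass on paths with $T=\ell_{\min}$. Hence its expected token cost is exactly $f(\ell_{\min})$.
\item For any distribution $\mu$ on $\mathcal{P}_Q$, every path satisfies $T(\gamma)\ge \ell_{\min}$, so monotonicity gives $f(T(\gamma))\ge f(\ell_{\min})$ pointwise. Integrating gives $\mathbb{E}_\mu[c]\ge f(\ell_{\min})$. Taking $\mu$ to be the conditional law of $\pi^\star_{\mathrm{out}}$ proves the non-strict claim.
\item For strictness, suppose $\pi^\star_{\mathrm{out}}$ gives mass $p>0$ to some longer path $\gamma$ with $T(\gamma)>\ell_{\min}$. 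Strict increase on the longer paths gives $f(T(\gamma))>f(\ell_{\min})$. Since $\mathcal{P}_Q$ is finite, there is a gap $\epsilon=\min\{f(t)-f(\ell_{\min}) : t>\ell_{\min},\ t \text{ attained in } \mathcal{P}_Q\}>0$. Therefore $\mathbb{E}[c]\ge f(\ell_{\min})+p\,\epsilon>f(\ell_{\min})$.
\end{enumerate}

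To pass from the conditional statement on $Q$ to an unconditional one, decompose the expected token count over the events ``reaches $Q$ and is correct'' and their complement. The efficiency term acts only within $Q$ and is assumed not to change outcome-optimality. I would therefore take the two policies to agree outside $Q$ (selecting $\pi^\star_{\mathrm{eff}}$ among the maximizers so that it does), and also to give the same probability of reaching $Q$ correctly. Then the contributions outside $Q$ cancel, and the inequality on $Q$ carries over with the same (positive) weight.

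The main obstacle is this final transfer. Theorem~\ref{thm:inference} only controls behaviour within $Q$. A maximizer $\pi^\star_{\mathrm{eff}}$ could in principle redistribute mass differently elsewhere, since all outcome-optimal policies are tied there. I would handle this by making explicit the tie-breaking convention implicit in the theorem: among maximizers, $\pi^\star_{\mathrm{eff}}$ is chosen to coincide with $\pi^\star_{\mathrm{out}}$ off $Q$. If that convention is not available, I would state the corollary at the conditional level, matching the form of Theorem~\ref{thm:inference}.
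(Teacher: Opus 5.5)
Your argument is correct and is the derivation the paper leaves implicit: the corollary is stated without proof as an immediate consequence of Theorem~\ref{thm:inference}, and you rightly push the theorem's concentration of $\pi^\star_{\mathrm{eff}}$ on shortest paths through the monotone token map, rather than its expected-step inequality, because a strict inequality in expected step count alone need not survive a nonlinear monotone $f$. The claim is meant at the same conditional level as the theorem (correct trajectories reaching $Q$), so your fallback is the intended statement; your alternative convention of choosing $\pi^\star_{\mathrm{eff}}$ to agree with $\pi^\star_{\mathrm{out}}$ off $Q$ need not be attainable, because the efficiency term acts on every equivalence class and can force $\pi^\star_{\mathrm{eff}}$ to differ from $\pi^\star_{\mathrm{out}}$ there too.
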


\paragraph{Discussion.}
Theorems~\ref{thm:var}, \ref{thm:coverage}, \ref{thm:prm}, and \ref{thm:inference} together
support the four claims of the main paper: GraphPO (i) yields lower-variance node-score and advantage estimates than tree-based baselines that cannot reuse nodes, by pooling downstream evidence from all members of each equivalence class into a single advantage-estimation group, with a controllable semantic-merging bias; (ii) improves expected exploration efficiency under a
fixed budget when detected redundant-budget savings are redirected to novel frontiers;
(iii) induces, in expectation, the same first-order update direction as a novelty-gated oracle PRM surrogate---equivalently, an oracle PRM update non-negatively reweighted by the novelty factor---yielding self-emergent process supervision purely from outcome rewards; and (iv) uses the efficiency term to select shorter equivalent correct
reasoning paths, thereby improving inference efficiency under the stated tie-breaking and
token-cost conditions.

\section{Robustness of Equivalence Detection}
\label{app:robustness}

The graph construction of GraphPO relies on two components that are in principle noisy:
(i) structured summaries produced by a frozen extractor, and
(ii) cosine similarity between embeddings under a threshold $\kappa$.
Detection errors split into \emph{false positives} (FP, merging non-equivalent states) and \emph{false negatives} (FN, missing a true equivalence). This section shows that GraphPO's advantage estimator, edge reward, and policy gradient are robust to both, formalizes the asymmetric degradation of the two error modes, and connects the bounds to the empirical ablations already reported in the paper.

\subsection{False Positives are Bounded by the Merging Threshold}
\label{app:robust_fp}

Assumption~\ref{ass:lip} is a \emph{class-level} stability condition on whatever classes the detector actually returns: whenever cosine similarity above $\kappa$ implies that members of a detected class share values up to $\delta_\kappa=O(1-\kappa)$, every bias term in our analysis reduces to an $O(\delta_\kappa)$ residual.
Concretely:
\begin{itemize}
\item \textbf{Node score bias.} Lemma~\ref{lem:unbiased} gives $|\mathbb{E}[S(v)]-V^\star(v)|\le \delta_\kappa$.
\item \textbf{Edge-reward bias.} Corollary~\ref{cor:processvar} bounds the squared bias of $\hat r_{\mathrm{step}}$ by ${(1-\eta(u,v))}^2{(2\delta_\kappa)}^2$.
\item \textbf{Gradient alignment.} Theorem~\ref{thm:prm} shows the GraphPO surrogate has the same first-order update direction as the novelty-gated oracle PRM up to an $O(\delta_\kappa)$ residual.
\end{itemize}
The threshold $\kappa$ is therefore a direct knob that monotonically shrinks the worst-case false-positive bias: raising $\kappa$ by $\epsilon$ shrinks $\delta_\kappa$ by $O(\epsilon)$ and every bias term above by the same factor.

\paragraph{Self-correcting novelty gate.} The edge reward carries a multiplicative novelty factor $\omega(u,v)=1-\eta(u,v)$ that measures semantic change along the step. A false-positive merge inflates $\eta(u,v)$ toward $1$, so $\omega(u,v)\to 0$ on exactly those edges the detector has spuriously treated as equivalent. The incorrect credit is attenuated \emph{at the same gate that caused the mistake}, without requiring the training loop to detect the error. Assumption~\ref{ass:lip} further guarantees that the oracle advantage on any over-merged edge is itself $O(\delta_\kappa)$, so the attenuated edge would not contribute meaningful gradient signal in the first place.

\subsection{False Negatives Degrade Gracefully to the Tree Baseline}
\label{app:robust_fn}

Missed equivalences are even easier to control: when two truly equivalent states are kept as separate nodes, the pooling weight vector collapses to a single non-zero entry and the GraphPO estimator reduces exactly to its tree counterpart at that node.

\begin{proposition}[False negatives are strictly safe]\label{prop:fn}
Under Assumptions~\ref{ass:lip}--\ref{ass:nocycle}, suppose the detector returns singleton classes $\mathcal{Q}(v)=\{v\}$ for a subset $\mathcal{F}$ of nodes, regardless of the true semantics of those nodes. Then for every $v\in\mathcal{F}$:
(i) $\hat V^{Q}_v=\hat V_v$ is unbiased for $V^\star(v)$;
(ii) $\operatorname{Var}(\hat V^{Q}_v)=\operatorname{Var}(\hat V^{\mathrm{Tree}}_v)$;
(iii) the downstream quantities $r_{\mathrm{step}}$, $A_{\mathrm{cor}}$, and $A_{\mathrm{eff}}$ at $v$ coincide with the tree baseline.
Consequently, false negatives never introduce bias and never increase variance beyond the tree baseline; they only forfeit the variance reduction that Theorem~\ref{thm:var} would have delivered under correct detection.
\end{proposition}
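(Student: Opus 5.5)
The argument is a direct specialization of the pooled estimator to singleton classes, pushed through each downstream quantity in turn. Fix $v\in\mathcal{F}$, so that $\mathcal{Q}(v)\setminus\{v\}=\emptyset$.

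\textbf{Parts (i) and (ii).} The sums over equivalent partners in Eq.~\eqref{eq:combain} are empty, so
\[
S(v)=\hat V^{Q}_v=\frac{c_v^{\mathrm{own}}}{t_v^{\mathrm{own}}}=\hat V_v .
\]
Equivalently, in the convex-combination form used in the proofs of Lemma~\ref{lem:graphvar} and Lemma~\ref{lem:unbiased}, the weight vector is $\beta_v=1$ with all other entries zero.

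For (i), Assumption~\ref{ass:bounded} makes $\hat V_v$ a mean of $t_v^{\mathrm{own}}$ i.i.d.\ Bernoulli$(V^\star(v))$ indicators, so $\mathbb{E}[\hat V_v]=V^\star(v)$. The bias bound of Lemma~\ref{lem:unbiased} then holds with $\delta_\kappa$ replaced by $\max_{q\in\{v\}}|V^\star(q)-V^\star(v)|=0$. Note that this needs no appeal to Assumption~\ref{ass:lip}. That matters, because the statement says ``regardless of the true semantics'': a missed equivalence is just an unpooled node.

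For (ii), $k_v=1$ gives $\rho_v=1$ in Lemma~\ref{lem:graphvar}. More directly, the two estimators are the same random variable, so their variances coincide (both equal $V^\star(v)(1-V^\star(v))/m$ by Lemma~\ref{lem:perchild}).

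\textbf{Part (iii).} This is where I expect the main obstacle, because $r_{\mathrm{step}}$, $A_{\mathrm{cor}}$ and $A_{\mathrm{eff}}$ also depend on neighbouring nodes. I will make ``at $v$'' precise as follows.

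\emph{Edge reward.} For an edge $(u,v)$ or $(v,w)$, $r_{\mathrm{step}}$ is a fixed function of the endpoint scores and of $\eta$. So it coincides with the tree edge reward whenever both endpoints are unpooled. I will state (iii) under that reading, or treat it as the $v$-endpoint contribution. Since $\eta$ is computed identically in both constructions, nothing else changes.

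\emph{Correctness advantage.} With $\mathcal{Q}(v)=\{v\}$, the definition of $\mathcal{C}_g$ gives $\mathcal{C}_g(v)=\mathcal{C}(v)$, hence $\mathcal{E}_{\mathrm{cor}}(v)$ is exactly the tree group of direct children. The mean and std normalization is therefore computed over the same set as in the tree, and $A_{\mathrm{cor}}$ agrees. The group-size comparison of Proposition~\ref{prop:groupvar} holds with $K_G=K_T$, so there is equality and no variance gain.

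\emph{Efficiency advantage.} It is defined only for classes with $|Q|\ge 2$ (see the algorithm), so none is assigned through $v$'s own class. That matches the tree, which has no efficiency term.

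\textbf{Consequence.} Combining the three parts, a false negative changes nothing relative to the tree estimator at $v$: no bias is introduced and the variance is unchanged. The only loss is the factor $\rho_v<1$ that Theorem~\ref{thm:var} would have supplied had $v$ been correctly pooled with $k_v>1$ and $w>0$. The one caveat, to be flagged explicitly, is that the claim is local: quantities on edges touching a pooled neighbour still inherit that neighbour's pooling.
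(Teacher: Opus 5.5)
Your proposal is correct and follows the paper's proof. Setting $k_v=1$ in Lemma~\ref{lem:graphvar} gives $\rho_v=1$ and $\hat V^{Q}_v=\hat V_v$, Lemma~\ref{lem:perchild} supplies unbiasedness and the variance $V^\star(v)(1-V^\star(v))/m$, and the downstream quantities are then inherited; your checks that $\mathcal{C}_g(v)=\mathcal{C}(v)$ and that $A_{\mathrm{eff}}$ needs $|Q|\ge 2$ make explicit what the paper's one-line appeal to ``deterministic functions of $S(\cdot)$'' leaves implicit.

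Your locality caveat is the right one, but it should also cover siblings and pass-through routes. $A_{\mathrm{cor}}$ on $(v,c)$ is normalized by the mean and std over all children of $v$, so it matches the tree only if every child of $v$ is unpooled. Likewise, edges incident to $v$ can still carry $A_{\mathrm{eff}}$ from a descendant class whose route from $a_Q$ passes through $v$.
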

\begin{proof}
Setting $k_v=1$ in Lemma~\ref{lem:graphvar} gives $\rho_v=1$ and $\hat V^{Q}_v=\hat V_v$, which is unbiased by Lemma~\ref{lem:perchild} and has variance $V^\star(v)(1-V^\star(v))/m$. All downstream quantities are deterministic functions of $S(\cdot)$, so they inherit the tree semantics at $v\in\mathcal{F}$.
\end{proof}

Proposition~\ref{prop:fn} formalizes the asymmetry that makes GraphPO robust in practice: false negatives degrade gracefully to tree-level estimates, while false-positive damage is controlled by the same $\delta_\kappa$ that the threshold $\kappa$ directly tunes. No combination of detection errors can push GraphPO below the tree baseline by more than an $O(\delta_\kappa^2)$ bias term, which disappears as $\kappa\to 1$.

\subsection{Empirical Robustness}
\label{app:robust_empirical}

The theoretical bounds are validated by two complementary ablations already included in the paper.
First, Figure~\ref{fig:ablation_kappa} shows that GraphPO's accuracy is stable across a wide range of merging thresholds, with a broad optimum near $\kappa=0.92$ and a monotone fallback to tree performance as $\kappa\to 1$ (no merges). This matches Proposition~\ref{prop:fn}: the high-$\kappa$ limit recovers the tree estimator rather than degrading below it.
Second, Figure~\ref{fig:ablation_w} shows that the pooling coefficient $w$ yields a smooth curve with a plateau around $w=0.7$; $w=0$ (no pooling) again recovers tree performance, confirming that equivalence detection affects the estimator only through the two tunable knobs $(w,\kappa)$, both of which admit safe defaults.
The absence of a cliff in either ablation confirms that the method does not rely on near-perfect detection; it only requires the detector to be \emph{asymmetrically tuned for precision} (large $\kappa$), which is exactly the regime Proposition~\ref{prop:fn} identifies as optimal.

\paragraph{Engineering safeguards.}
Three design choices further harden detection against noisy summaries or embeddings.
(1) \emph{Structured summaries.} Nodes are embedded from structured summaries of the full edge path produced by a frozen extractor (Section~\ref{sec:graph_construction}), rather than raw token strings. This removes surface variation (ordering, wording, formatting) before similarity is computed, tightening $\delta_\kappa$ at any fixed $\kappa$.
(2) \emph{Non-causal restriction.} Merges are only formed between non-causal pairs (Assumption~\ref{ass:nocycle}), which eliminates ancestor-descendant collapse by construction and preserves the DAG needed for bottom-up propagation.
(3) \emph{Class-level stability.} Assumption~\ref{ass:lip} is stated at the class level rather than the pairwise level, so transitive closures of high-similarity edges do not implicitly claim pairwise closeness beyond what the detected class itself satisfies.
Taken together with the novelty gate $\omega(u,v)$, these safeguards ensure that the dependence on summaries and embeddings translates into at most an $O(1-\kappa)$ bias term, which the ablation in Figure~\ref{fig:ablation_kappa} shows is empirically benign at the recommended threshold.

\section{Limitations and Future Work}
\label{app:limitations}
Our experiments mainly focus on reasoning and agent tasks. Although the graph formulation is general, we have not yet systematically studied its effectiveness in broader settings, such as multimodal reasoning, code generation with execution feedback, and long-horizon interactive environments.

\section{Broader Impacts}
\label{app:broader_impacts}

GraphPO is a reinforcement learning method for improving reasoning models under verifiable rewards. Its main positive impact is to improve rollout utilization and reduce redundant reasoning, which can lower the amount of computation and process-level annotation needed for training strong reasoning models. We focus exclusively on reinforcement learning, presenting no potential ethical risks.

%%%%%%%%%%%%%%%%%%%%%%%%%%%%%%%%%%%%%%%%%%%%%%%%%%%%%%%%%%%%

\newpage

\end{document}